\newtheorem{theorem}{Theorem}
\newtheorem{lemma}[theorem]{Lemma}
\newtheorem{definition}{Definition}
\algrenewcommand\algorithmicrequire{\textbf{Input:}}
\algrenewcommand\algorithmicensure{\textbf{Output:}}
\begin{document}

\title{Adaptive Clipping for Privacy-Preserving Few-Shot Learning: Enhancing Generalization with \\Limited Data}

\author{Kanishka Ranaweera~\IEEEmembership{Student Member, ~IEEE },  Dinh C. Nguyen,~\IEEEmembership{Member, ~IEEE, } Pubudu N. Pathirana,~\IEEEmembership{Senior Member, ~IEEE, } David Smith,~\IEEEmembership{Member, ~IEEE, } Ming Ding,~\IEEEmembership{Senior Member, ~IEEE, } Thierry Rakotoarivelo,~\IEEEmembership{Senior Member, ~IEEE, } Aruna Seneviratne,~\IEEEmembership{Senior Member, ~IEEE }

\IEEEcompsocitemizethanks{\IEEEcompsocthanksitem Kanishka Ranaweera is with School of Engineering and Built Environment, Deakin University, Waurn
Ponds, VIC 3216, Australia, and also with the Data61, CSIRO, Eveleigh, NSW 2015, Australia. \protect\\
E-mail: kranaweera@deakin.edu.au
\IEEEcompsocthanksitem Dinh C. Nguyen is with the Department of Electrical and Computer Engineering,  The University of Alabama in Huntsville Alabama, USA. \protect\\
E-mail: Dinh.Nguyen@uah.edu
\IEEEcompsocthanksitem Pubudu N. Pathirana is with School of Engineering and Built Environment, Deakin University, Waurn Ponds, VIC 3216, Australia. \protect\\
E-mail: pubudu.pathirana@deakin.edu.au
\IEEEcompsocthanksitem David Smith is with Data61, CSIRO, Eveleigh, NSW 2015, Australia. \protect\\
E-mail: david.smith@data61.csiro.au
\IEEEcompsocthanksitem Ming Ding is with Data61, CSIRO, Eveleigh, NSW 2015, Australia. \protect\\
E-mail: ming.ding@data61.csiro.au
\IEEEcompsocthanksitem Thierry Rakotoarivelo is with Data61, CSIRO, Eveleigh, NSW 2015, Australia. \protect\\
E-mail: thierry.rakotoarivelo@data61.csiro.au
\IEEEcompsocthanksitem Aruna Seneviratne is with School of Electrical Engineering and Telecommunications, University of New South Wales (UNSW), NSW, Australia. \protect\\
E-mail: a.seneviratne@unsw.edu.au

}
}

\markboth{}%
{}


\maketitle

\begin{abstract}
In the era of data-driven machine-learning applications, 
privacy concerns and the scarcity of labeled data have become paramount challenges. 
These challenges are particularly pronounced in the domain of few-shot learning, 
where the ability to learn from limited labeled data is crucial. 
Privacy-preserving few-shot learning algorithms have emerged as a promising solution to address such pronounced challenges.
However, 
it is well-known that privacy-preserving techniques often lead to a drop in utility due to the fundamental trade-off between data privacy and model performance. 
To enhance the utility of privacy-preserving few-shot learning methods, 
we introduce a novel approach called Meta-Clip. 
This technique is specifically designed for meta-learning algorithms, 
including Differentially Private (DP) model-agnostic meta-learning, DP-Reptile, and DP-MetaSGD algorithms, 
with the objective of balancing data privacy preservation with learning capacity maximization. 
By dynamically adjusting clipping thresholds during the training process, 
our Adaptive Clipping method provides fine-grained control over the disclosure of sensitive information, 
mitigating overfitting on small datasets and significantly improving the generalization performance of meta-learning models. 
Through comprehensive experiments on diverse benchmark datasets,
we demonstrate the effectiveness of our approach in minimizing utility degradation, 
showcasing a superior privacy-utility trade-off compared to existing privacy-preserving techniques. 
The adoption of Adaptive Clipping represents a substantial step forward in the field of privacy-preserving few-shot learning,
empowering the development of secure and accurate models for real-world applications, 
especially in scenarios where there are limited data availability.
\end{abstract}
  
\begin{IEEEkeywords}
Few shot learning, differential privacy, meta-learning, adaptive clipping.
\end{IEEEkeywords}

\section{Introduction}
\IEEEPARstart{F}ew-shot learning has emerged as a challenging and important problem in the field of machine learning \cite{wang2020generalizing}. Traditional learning algorithms typically require a large amount of labeled data to achieve satisfactory performance. However, in many real-world scenarios, obtaining such extensive labeled data is often impractical or prohibitively expensive. Few-shot learning aims to address this limitation by enabling models to learn from a small number of labeled examples, mimicking the human ability to generalize knowledge from limited experience.

In the context of few-shot learning, meta-learning has gained considerable attention due to its ability to rapidly adapt to new tasks with limited data. Meta-learning algorithms leverage prior knowledge acquired from a distribution of related tasks to learn a meta-model capable of quickly adapting to new tasks. These algorithms hold promise for solving the few-shot learning problem by effectively utilizing shared information across tasks \cite{vilalta2002perspective,hospedales2021meta}.

While meta-learning algorithms have shown promising results, it is essential to acknowledge the pressing privacy concerns that have been highlighted in machine learning research. Several works have documented privacy violations and risks associated with handling sensitive data in the context of machine learning models \cite{swee,4531148,gambs2014anonymization}. These issues encompass various attacks, data inference vulnerabilities, and membership inference threats that can compromise user privacy. It is crucial to recognize that these privacy challenges faced in machine learning are also relevant to the domain of meta-learning, given the potential for handling sensitive information in the meta-learning process. 

\subsection{Related Work}

The exploration of privacy-preserving techniques in the realm of machine learning has witnessed a surge in response to growing concerns regarding data privacy and security. Among these techniques, differential privacy (DP) has emerged as a very important framework, offering robust guarantees for privacy preservation. The work by \cite{abadi2016deep, song2013stochastic, bassily2014private} introduced deep learning with DP, paving the way for privacy-aware machine learning models. This work laid the foundation for incorporating DP into various machine learning paradigms, including meta-learning.

In the context of meta-learning, Li et al. \cite{Li2020Differentially} explored DP meta-learning, emphasizing the need for privacy-preserving techniques in scenarios where meta-models are trained on sensitive or private datasets. Their work provides insights into how the challenges of few-shot learning and privacy concerns can be jointly addressed through the integration of DP.

Zhou et al. \cite{zhou2022tasklevel} investigated the task-level DP approach within the meta-learning paradigm. By introducing privacy preservation at the task level, this work offers a nuanced perspective on securing sensitive information during the meta-learning process. The task-level granularity allows for fine-tuned control over privacy guarantees, addressing concerns related to both individual tasks and the overall meta-model.

These advancements in incorporating DP into meta-learning underscore the ongoing efforts to reconcile the pursuit of improved machine learning performance with the imperative to safeguard user privacy in the face of increasing privacy challenges.


\subsection{Motivations and Key Contributions}

From our observation, traditional implementation of privacy-preserving techniques in meta-learning like DP often leads to a drop in utility, impacting the performance of the models. To mitigate this issue, we propose an adaptive clipping technique. This technique aims to strike a balance between privacy protection and model performance by dynamically adjusting the clipping threshold during the meta-learning process. By doing so, sensitive information can be effectively protected while minimizing the adverse effects on utility, thereby facilitating more efficient and privacy-aware meta-learning algorithms. 

By applying our adaptive clipping technique to well-known meta-learning algorithms, such as Model-Agnostic Meta-Learning (MAML) \cite{finn2017model}, Reptile \cite{nichol2018reptile}, and Meta-SGD \cite{li2017meta}, we demonstrate its significant potential to improve their performance while ensuring robust privacy guarantees. This paper aims to investigate the effectiveness of our adaptive clipping technique in enhancing the utility of task-level DP meta-learning algorithms for few-shot learning tasks. We focus on popular meta-learning algorithms, such as Model-Agnostic Meta-Learning (MAML) \cite{finn2017model}, Reptile \cite{nichol2018reptile}, and Meta-SGD\cite{li2017meta}, and demonstrate how our technique can improve their performance with privacy guarantees. The contributions of this work are highlighted as follows:

\IEEEpubidadjcol
 
\begin{enumerate}
    \item We propose an adaptive clipping technique that dynamically adjusts the clipping threshold during the training of DP meta-learning algorithms. Our technique effectively balances privacy preservation and utility, allowing the meta-model to leverage available information while meeting privacy constraints.
    
    \item We integrate the adaptive clipping technique into popular task-level DP meta-learning algorithms:Model-Agnostic Meta-Learning (MAML), Reptile, and Meta-SGD. Through extensive experiments on benchmark datasets, we demonstrate that our approach improves the performance and utility of these algorithms in few-shot learning scenarios.
    
    \item We are the first to investigate the application of task-level DP to the Meta-SGD algorithm, a widely used meta-learning algorithm. By incorporating DP into Meta-SGD, we contribute to the development of privacy-preserving techniques in meta-learning and establish its feasibility and effectiveness in the few-shot learning context.
    
    \item We provide a comprehensive convergence analysis of the adaptive clipping technique-enhanced meta-learning algorithms. Additionally, we conduct a thorough privacy analysis of our proposed techniques, quantifying the privacy guarantees provided by our DP meta-learning algorithms and analyzing the trade-offs between privacy and utility.
    
    \item We conduct comprehensive experiments to evaluate the performance of our approach. We compare our adaptive clipping technique-enhanced meta-learning algorithms against other state-of-art approaches, showcasing significant improvements in both privacy preservation and predictive performance.
    
\end{enumerate}

\subsection{Paper Organization}
The remainder of this paper is organized as follows: In Section 2, we provide a brief overview of few-shot learning, its challenges, and the role of meta-learning in addressing these challenges. We then discuss the importance of privacy preservation in meta-learning and introduce DP as a privacy-preserving framework. Section 3 presents the details of our proposed adaptive clipping technique and its integration into MAML, Reptile, and Meta-SGD. Subsequently, Section 4 describes the experimental setup and presents the results and analysis of our approach. Section 5 discusses the implications of our findings and potential future research directions. Finally, Section 6 concludes the paper and summarizes the contributions of our work.

\section{Background}
\label{sec:bg}
In this section, we provide background information on few-shot learning, meta-learning, DP learning, and adaptive clipping techniques.

\subsection{Few-Shot Learning}
Few-shot learning is a subfield of machine learning that tackles the problem of learning from limited labeled data. In traditional supervised learning, a large amount of labeled data is required to train models effectively. However, in many real-world scenarios, collecting abundant labeled examples is costly, time-consuming, or impractical. Few-shot learning addresses this challenge by aiming to learn from a small number of labeled instances, typically ranging from a few to a few hundred \cite{bengio2013optimization,hochreiter2001learning,wang2020generalizing,schmidhuber1992learning}.

The key objective of few-shot learning is to enable models to generalize well and adapt quickly to new tasks with only a limited number of examples. This requires the models to possess strong generalization abilities and the capability to learn efficiently from scarce labeled data.

Several approaches have been proposed in the literature to address few-shot learning. 
\begin{itemize}
    \item \textbf{Metric-based methods: }This centers on the idea of learning a similarity metric. This metric quantifies the similarity or dissimilarity between pairs of instances. These methods leverage the learned metric for tasks such as classification or retrieval. By considering how instances relate to each other, these methods can make informed predictions even when only a small labeled dataset is available.\cite{bromley1993signature,vinyals2016matching,snell2017prototypical,garcia2017few,sung2018learning}.
    \item \textbf{Meta-learning algorithms: } These techniques  to learn the learning process itself. These algorithms aim to capture underlying patterns or knowledge from multiple learning tasks. This meta-knowledge is then harnessed to facilitate swift adaptation to new tasks that come with limited examples. The strength of meta-learning lies in its ability to impart a sort of "learning to learn" capability, where the model becomes proficient at acquiring knowledge from few examples \cite{nichol2018reptile,finn2017model,li2017meta}.

    \item \textbf{Generative models: } This approach generate synthetic data samples within the target task's domain. By creating additional instances that are consistent with the characteristics of the given task, these models effectively expand the pool of labeled data. This augmentation aids in training more robust models, especially when the available labeled examples are insufficient \cite{zhang2018metagan,schonfeld2019generalized}.
\end{itemize}

Few-shot learning is a critical subfield that addresses the challenges posed by limited labeled data. By employing diverse methodologies, ranging from learning similarity metrics to meta-learning and generative modeling, this field empowers machine learning models to excel in scenarios where only a handful of labeled instances are accessible. This advancement is crucial for real-world applications that demand efficient and accurate learning even in data-scarce environments.

\subsection{Meta-Learning}
Meta-learning, also known as ``learning to learn", takes the idea of few-shot learning a step further. Instead of focusing on learning from a small number of examples for a specific task, meta-learning aims to develop algorithms that can learn from multiple learning tasks and generalize this knowledge to adapt quickly to new tasks.

Meta-learning algorithms learn a meta-model or meta-learner that captures the shared patterns or knowledge across different learning tasks. The meta-learner is trained on a distribution of tasks, each containing a few labeled examples. By leveraging the knowledge gained from these tasks, the meta-learner can quickly adapt to new tasks with limited labeled data. This makes meta-learning particularly useful in scenarios where obtaining labeled data for each specific task is challenging or time-consuming.

Several meta-learning algorithms have been proposed:
\begin{itemize}
    \item \textbf{Model-Agnostic Meta-Learning (MAML):} MAML \cite{finn2017model} is designed to find an initialization of model parameters that can be quickly adapted to new tasks with a small number of gradient updates. It optimizes for the meta-objective, making the model parameters suitable for rapid adaptation.
    \item \textbf{Reptile:} Reptile \cite{nichol2018reptile} is a simpler and more scalable approach than MAML. It performs multiple steps of stochastic gradient descent (SGD) on each task and then moves the initial parameters towards the parameters obtained after these updates.
    \item \textbf{Meta-SGD:} Meta-SGD \cite{li2017meta} extends MAML by not only learning the initial parameters but also learning the learning rates for each parameter. This allows for more flexibility and faster adaptation during the meta-learning process.
\end{itemize}

These algorithms differ in their specific approaches to capturing and leveraging meta-knowledge, but they all share the common goal of enabling models to generalize well and learn quickly from few examples.

The combination of few-shot learning and meta-learning has shown significant potential in a wide range of applications, including image recognition, natural language understanding, and reinforcement learning. By leveraging meta-learning techniques, models can effectively adapt to new tasks with minimal labeled data, making them highly applicable in real-world scenarios with limited resources\cite{singh2021metamed,huang2018natural,nagabandi2018learning,hsu2020meta}.

\subsection{Differential Privacy}
DP provides a rigorous framework for quantifying the privacy guarantees offered by machine learning algorithms\cite{dwork2006,dwork2014algorithmic}. It ensures that the presence or absence of an individual's data does not significantly impact the output of the algorithm. To understand DP, we introduce the following definitions and concepts:

\begin{definition}
    \textbf{(Neighboring Datasets):} Two datasets, $D$ and $D'$, are considered neighboring datasets if they differ by only a single data point.
\end{definition}

\begin{definition}
    \textbf{(Randomized Algorithm):} A randomized algorithm is an algorithm that introduces randomness into its computation or output.
\end{definition}

\begin{definition}
    \textbf{(($\varepsilon, \delta$)-DP):} A randomized algorithm $\mathcal{M}$ satisfies ($\varepsilon, \delta$)-DP if, for any two neighboring datasets $D$ and $D'$ and any subset $S$ of the algorithm's output space:

\begin{equation}
\label{DP_def1}
    Pr[\mathcal{M}(D)\in S] \leq e^{\varepsilon}Pr[\mathcal{M}(D')\in S] + \delta.
\end{equation}

\end{definition}

In this definition, $\varepsilon$ serves as a privacy parameter that controls the strength of privacy protection. A smaller $\varepsilon$ value indicates a stronger privacy guarantee, as it restricts the difference in the probabilities of any output event between neighboring datasets. The use of exponential function $e^{\varepsilon}$ ensures a multiplicative composition of privacy guarantees. The failure probability, denoted by the term $\delta$, is the largest amount by which the method is permitted to stray from the required privacy guarantee. For instance, in \ref{DP_def1}, we can achieve the $\varepsilon$ privacy guarantee with the probability of $(1 - \delta)$.  and $\delta$ is the likelihood of failure.  

To achieve DP, DP learning algorithms introduce controlled noise into the learning process. One commonly used mechanism is the addition of Gaussian noise to the model's parameters or gradients. The amount of noise added is determined by the sensitivity of the learning algorithm, which captures how much the algorithm's output changes in response to changes in the input dataset.

\begin{definition}
    \textbf{(Global Sensitivity):} The global sensitivity of a function $f$ quantifies the maximum amount by which $f$ can change when a single data point is added or removed from the dataset. Mathematically, the global sensitivity of function $f$ is given by:

\begin{equation}
    \triangle f = max_{D, D': |D - D'|_2 \leq 1} |f(D) - f(D')|_2
\end{equation}

\end{definition}

Here, $|.|_2$ represents the $L_2$ norm, which measures the Euclidean distance between vectors.

By calibrating the amount of noise added to the learning algorithm based on its global sensitivity and privacy parameters $\varepsilon$ and $\delta$, DP learning algorithms can achieve a balance between privacy and utility. Adaptive clipping is one such mechanism that dynamically adjusts the clipping threshold based on the sensitivity of individual gradients, effectively reducing noise and improving the utility-privacy trade-off.

DP learning has emerged as a principled approach to address privacy concerns in machine learning. It provides strong privacy guarantees by introducing carefully calibrated noise into the learning process, thereby obscuring individual-level information in the training data. DP ensures that the presence or absence of any particular training sample does not significantly impact the learned model or compromise the privacy of individual data contributors.

One popular technique for DP learning is DP stochastic gradient descent (DP-SGD)\cite{song2013stochastic,bassily2014private,abadi2016deep}. DP-SGD achieves privacy by bounding the sensitivity of the gradients computed on the training data and then adding noise proportional to this bound during the gradient updates. This can be formalized as follows:

Let $\mathcal{D}$ be the training dataset, and $L(\theta)$ be the loss function parameterized by $\theta$. The sensitivity of the loss function, denoted as $\Delta L$, measures the maximum change in $L(\theta)$ caused by removing or adding a single training sample. DP-SGD adds noise to the gradients $\nabla L(\theta)$ according to the privacy parameter $\epsilon$ to ensure DP:

\begin{equation}
   \theta' \gets \theta - \frac{\eta}{|\mathcal{D}|} \sum_{x \in \mathcal{D}} \left(\nabla L(\theta) + \mathcal{N}(0, \sigma^2I)\right) 
\end{equation}

where $\eta$ is the learning rate and $\mathcal{N}(0, \sigma^2I)$ represents Gaussian noise added to the gradients with variance $\sigma^2$. The noise magnitude is carefully controlled to achieve the desired privacy level while maintaining a reasonable trade-off between privacy and model accuracy.

\subsection{Privacy Preserving Few-Shot Learning}

In the context of few-shot learning, the scarcity of labeled data necessitates the utilization of supplementary information or knowledge to generalize effectively to new tasks or classes. This often implies the need for the incorporation of external data sources or the adoption of meta-learning frameworks, which can cause immediate privacy concerns. Potential privacy risks encompass unauthorized access to sensitive information, re-identification threats, and the deduction of private attributes or behaviors from trained models. Consequently, preserving the privacy of individual data points is pivotal to uphold ethical and responsible few-shot learning systems.

Numerous algorithms have been introduced to address privacy concerns in the domain of few-shot learning. These algorithms are designed to offer privacy assurances while upholding the usefulness and efficacy of the acquired models.

In their work, \cite{Li2020Differentially} introduce a novel DP algorithm designed for gradient-based parameter transfer techniques, such as those used in meta-learning. The paper formalizes the concept of task-global DP, provides provable transfer learning guarantees in convex settings, and illustrates substantial performance enhancements in tasks such as recurrent neural language modeling and image classification when compared to the more frequently explored local privacy approach.

\cite{zhou2022tasklevel} focuses on the challenges of meta-learning with task-level DP. This research addresses privacy concerns during the meta-learning training process, considering the exchange of task-specific information, which can pose privacy risks in sensitive applications. 

Applying DP learning to the few-shot learning setting presents several challenges. First, the limited availability of labeled examples per task exacerbates the trade-off between privacy and model performance. Privacy mechanisms that add noise to the gradients may lead to excessive noise accumulation, hindering the model's ability to generalize effectively. Moreover, fixed clipping norms used in traditional DP learning approaches may not be optimal for few-shot learning scenarios, where tasks can exhibit significant variations in their gradient magnitudes.

To address these challenges, we propose a novel technique called Meta-clip, which combines adaptive clipping with meta-learning in the context of DP few-shot learning. Meta-clip leverages the strengths of both adaptive clipping and meta-learning to achieve effective privacy preservation and improved generalization performance in few-shot learning tasks. The following sections provide a detailed explanation of the Meta-clip technique and present experimental evaluations to validate its effectiveness.

\begin{algorithm}[t!]
\caption{DP-MAML Algorithm w/ Meta-clip}\label{algmaml}
\begin{algorithmic}[1]
\Require Task distribution $\mathcal{D}$, Learning rate $\alpha$, Meta learning rate $\beta$, Meta-iterations $K$
\State Initialize model parameters $\theta$ 
\State Compute the initial clipping norm $\mathcal{C}$ as the median of the $\ell_2$ gradient norms in the first epoch
\For{$k=1$ \textbf{To} $K$}
    \For{each task $T_i$ sampled from $\mathcal{D}$}
        \State Sample a mini-batch of data from task $T_i$: 
        \State $\mathcal{D}_{T_i}=\{ (x_i, y_i) \}$
        \State Compute the loss on $\mathcal{D}_{T_i}$ using the current $\theta$: 
        \State $L_i(\theta) = \mathcal{L}(f_{\theta}(x_i), y_i)$
        \State Compute the gradient w.r.t. $\theta$ for task $T_i$: $\nabla_\theta L_i(\theta)$
        \State Clip gradient: 
        \State $\nabla_\theta L_i(\theta) = \nabla_\theta L_i(\theta) / max(1, \dfrac{||\nabla_\theta L_i(\theta)||_2}{\mathcal{C}})$
        \State Add noise:
        \State $\hat{\nabla}_\theta L_i(\theta)=\dfrac{1}{|T_i|}(\sum_{i}\nabla_\theta L_i(\theta)+\mathcal{N}(0,\sigma^2\mathcal{C}^2I))$
        \State Compute the updated parameter $\theta_i'$: 
        
        \State $\theta_i' = \theta - \alpha \hat{\nabla}_{\theta} L_i(\theta)$
    
    \EndFor
    \State Compute the meta-gradient over all tasks:

    \State $\nabla_{\theta_i}L_i(\theta'_{i}) = \sum_{(x_i, y_i) \in \mathcal{D}_{T_i}\sim\mathcal{D}} \mathcal{L}(f_{\theta'}(x_i), y_i)$
    \State Meta update: 
    \State $\theta, \mathcal{C} \leftarrow \theta, \mathcal{C} - \beta\nabla_{\theta_i}L_i(\theta'_{i})$
    \EndFor
    \Ensure $\theta_{t}$ and compute the overall privacy cost ($\varepsilon$, $\delta$) using a privacy accounting method.

\end{algorithmic}
\end{algorithm}

\section{Our Approach}\label{sec:approach}

In our methodology, we present an innovative adaptive clipping technique tailored for meta-learning algorithms, which we refer to as "Meta-clip". This technique is applied to three prominent few-shot learning algorithms, namely MAML, Reptile, and Meta-SGD. Our objective encompasses two key aspects: reducing utility loss and enhancing generalization, all while maintaining the same level of privacy assurances. 

\subsection{DP-MAML Algorithm with Meta-clip}
Algorithm \ref{algmaml} outlines our approach for incorporating Meta-clip within MAML. It takes four primary inputs: a task distribution \(\mathcal{D}\), learning rate \(\alpha\), meta-learning rate \(\beta\), and the number of meta-iterations \(K\). At the outset, the algorithm initializes model parameters \(\theta\) and a clipping threshold \(\mathcal{C}\). Notably, in the initialization of the clipping norm, the algorithm sets \(\mathcal{C}\) to the median of the per-sample gradient norms. In each meta-iteration \(k\) from 1 to \(K\), the algorithm iterates through tasks sampled from \(\mathcal{D}\). For each task \(T_i\), it samples a mini-batch of data \(\mathcal{D}_{T_i}\) and computes the loss \(L_i(\theta)\) using the current model parameters. The gradient \(\nabla_\theta L_i(\theta)\) is then computed and clipped to \(\mathcal{C}\) to control its magnitude. To protect user privacy, Gaussian noise is added to the clipped gradient, creating \(\hat{\nabla}_\theta L_i(\theta)\). The model parameters are updated using this noisy gradient. After processing all tasks, a meta-gradient is computed to capture overall task performance. Finally, a meta-update step adjusts the model parameters \(\theta\) and the clipping threshold \(\mathcal{C}\) using the meta-gradient. This iterative process repeats for \(K\) meta-iterations, allowing the model to adapt and learn from various tasks while preserving privacy through gradient clipping and noise addition. Our method is visually outlined in Figure~\ref{metaclip}.

\begin{algorithm}[t!]
\centering
\caption{DP Reptile Algorithm w/ Meta-clip}\label{reptile1}
\begin{algorithmic}
\Require Task distribution $\mathcal{D}$, learning rate $\alpha$, meta-learning rate $\beta$, meta-iterations $K$
\State Initialize model parameters $\theta$ randomly
\State Compute the initial clipping norm $\mathcal{C}$ as the median of the $\ell_2$ gradient norms in the first epoch
\For{$k=1$ \textbf{To} $K$}
\State Sample a meta-batch of tasks $\mathcal{T}_b \sim \mathcal{D}$
\For{each task $\mathcal{T}_i \in \mathcal{T}_b$}
\For{each $(x_i,y_i) \in \mathcal{T}_i$}
\State Compute the loss using the current $\theta$: 
\State $L_i(\theta) = \mathcal{L}(f_{\theta}(x_i), y_i)$
\State Compute the gradient w.r.t. $\theta$: $\nabla_\theta L_i(\theta)$
\State Clip gradient: 
\State $\nabla_\theta L_i(\theta) = \nabla_\theta L_i(\theta) / max(1, \dfrac{||\nabla_\theta L_i(\theta)||_2}{\mathcal{C}})$
\State Add noise:
\State $\hat{\nabla}_\theta L_i(\theta)=\dfrac{1}{|\mathcal{T}_i|}(\sum_{i}\nabla_\theta L_i(\theta)+\mathcal{N}(0,\sigma^2\mathcal{C}^2I))$
\State Compute:
\State $W_i=\theta - \alpha \hat{\nabla}_{\theta} L_i(\theta)$
\EndFor
\EndFor
\State Update the model parameters using the Reptile update rule:
\State $\theta_{t+1} \longleftarrow \theta_{t} + \beta\dfrac{1}{|\mathcal{T}_b|}\sum_{\mathcal{T}_i}(W_i -\theta_{t})$
\State Update the clipping norm:
\State $\mathcal{C} \longleftarrow \mathcal{C} + \beta\dfrac{1}{|\mathcal{T}_b|}\sum_{\mathcal{T}_i}(W_i -\theta_{t})$
\EndFor
\Ensure $\theta_{t}$ and compute the overall privacy cost ($\varepsilon$, $\delta$) using a privacy accounting method.
\end{algorithmic}
\end{algorithm}

\subsection{DP-Reptile Algorithm with Meta-clip}
Following the principles outlined in the DP-MAML with Meta-clip algorithm, we present Algorithm~\ref{reptile1}: the DP Reptile (DP-Reptile) algorithm with Meta-clip. DP-Reptile is tailored for meta-learning scenarios and integrates DP into the task-level Reptile optimization procedure. In each meta-iteration, DP-Reptile samples a meta-batch of tasks from the task distribution \(\mathcal{D}\). For each task in the batch, the algorithm computes the loss and gradient with respect to the model parameters \(\theta\). Similar to DP-MAML, DP-Reptile employs gradient clipping to control the magnitude of gradients and adds Gaussian noise to protect user privacy. The noisy gradients are then used to compute updated model parameters \(W_i\) for each task. DP-Reptile applies the Reptile update rule to adjust the model parameters \(\theta\) and also updates the clipping norm \(\mathcal{C}\) during each meta-iteration. The process repeats for the specified number of meta-iterations \(K\), allowing DP-Reptile to learn meta-level information while preserving DP. Finally, the algorithm returns the model parameters \(\theta_t\) and computes the overall privacy cost (\(\varepsilon\), \(\delta\)) using a privacy accounting method such as the moments accountant \cite{abadi2016deep}, the Renyi Differential Privacy (RDP) accountant \cite{wang2019subsampled} or the Gaussian Differential Privacy (GDP) accountant \cite{gopi2021numerical}, ensuring robust privacy guarantees.

It's essential to highlight the distinction between the Reptile algorithm and MAML, as both are fundamental approaches in the field of meta-learning. While DP-MAML with Meta-clip, as described earlier, focuses on adapting model parameters within a privacy-preserving context, Reptile follows a different meta-learning paradigm. Reptile aims at capturing fast adaptation by performing a form of "inner-loop" optimization on each task, adjusting the model parameters closer to the fine-tuning direction. In contrast, MAML focuses on finding an initialization of model parameters that can be fine-tuned quickly to adapt to new tasks, effectively optimizing for "meta-objective" learning. DP-Reptile with Meta-clip, as presented in Algorithm~\ref{reptile1}, extends the Reptile framework by incorporating DP, to address privacy concerns in the context of meta-learning. This distinction in optimization strategies and privacy considerations is crucial when choosing an appropriate algorithm for specific meta-learning scenarios and privacy requirements.

\subsection{DP-MetaSGD with Meta-clip}

Next, we introduce Algorithm~\ref{meta-sgd-dp}: the DP-Meta-SGD algorithm with Meta-Clip. DP-Meta-SGD is a powerful approach for meta-learning with privacy guarantees. Similar to the previously discussed DP-MAML and DP-Reptile algorithms, DP-Meta-SGD is designed to adapt model parameters across a range of tasks while preserving user privacy. In each meta-iteration, the algorithm samples a meta-batch of tasks from the task distribution \(\mathcal{D}\). For each task, it computes the loss and gradients with respect to the model parameters \(\theta\). Gradient clipping is applied to manage the gradient magnitudes, and DP is ensured through the addition of Gaussian noise to the gradients. The noisy gradients are used to calculate updated model parameters \(\theta_i'\) for each task. DP-Meta-SGD distinguishes itself by incorporating an adaptive learning rate (\(\alpha\)) in its meta-optimization process, allowing the learning rate to evolve during meta-learning. Additionally, it dynamically adjusts the clipping norm (\(\mathcal{C}\)) during each meta-iteration. The algorithm iterates for a specified number of meta-iterations \(K\) to facilitate meta-learning while preserving privacy. Finally, DP-Meta-SGD returns the model parameters \(\theta_t\) and computes the overall privacy cost (\(\varepsilon\), \(\delta\)) using a privacy accounting method, ensuring robust privacy protection in the meta-learning process.

 In each meta-iteration, the algorithm samples a meta-batch of tasks from the task distribution \(\mathcal{D}\). For each task, it computes the loss and gradients with respect to the model parameters \(\theta\). Gradient clipping is applied to manage the gradient magnitudes, and DP is ensured through the addition of Gaussian noise to the gradients. The noisy gradients are used to calculate updated model parameters \(W_i\) for each task. DP-Meta-SGD incorporates a meta-gradient computation over all tasks, allowing it to adapt the meta-parameters, the learning rate, and the clipping norm during each meta-iteration. The algorithm iterates for a specified number of meta-iterations \(K\) to facilitate meta-learning while preserving privacy. Finally, DP-Meta-SGD returns the model parameters \(\theta_t\) and computes the overall privacy cost (\(\varepsilon\), \(\delta\)) using a privacy accounting method, ensuring robust privacy protection in the meta-learning process.

This brings us to the distinctions between the DP Meta-SGD (DP-Meta-SGD) algorithm with Meta-Clip and its counterparts, DP-MAML and DP-Reptile. While all three algorithms prioritize DP through gradient clipping and noise addition, DP-Meta-SGD stands out with its unique optimization strategy. DP-Meta-SGD, following a stochastic gradient descent (SGD) approach, not only updates model parameters but also incorporates an adaptive learning rate (\(\alpha\)) and dynamically adjusts the clipping norm (\(\mathcal{C}\)) during each meta-iteration. DP-Meta-SGD's approach is particularly advantageous in scenarios where fine-tuning the learning process, including adjusting the learning rate and clipping norm, is crucial for effective meta-learning while ensuring stringent privacy preservation.

\begin{algorithm}[t!]
\caption{DP Meta-SGD Algorithm w/ Meta-clip}\label{meta-sgd-dp}
\begin{algorithmic}
\Require Task distribution $\mathcal{D}$, Meta learning rate $\beta$, Meta-iterations $K$
\State Initialize model parameters $\theta$, learning rate $\alpha$
\State Compute the initial clipping norm $\mathcal{C}$ as the median of the $\ell_2$ gradient norms in the first epoch
\For{$k=1$ \textbf{To} $K$}
\State Sample a meta-batch of tasks $\mathcal{T}_b \sim \mathcal{D}$
\For{each task $\mathcal{T}_i \in \mathcal{T}_b$}
\For{each $(x_i,y_i) \in \mathcal{T}_i$}
\State Compute the loss using the current $\theta$: 
\State $L_i(\theta) = \mathcal{L}(f_{\theta}(x_i), y_i)$
\State Compute the gradient w.r.t. $\theta$: $\nabla_\theta L_i(\theta)$
\State Clip gradient: 
\State $\nabla_\theta L_i(\theta) = \nabla_\theta L_i(\theta) / max(1, \dfrac{||\nabla_\theta L_i(\theta)||_2}{\mathcal{C}})$
\State Add noise:
\State $\hat{\nabla}_\theta L_i(\theta)=\dfrac{1}{|\mathcal{T}_i|}(\sum_{i}\nabla_\theta L_i(\theta)+\mathcal{N}(0,\sigma^2\mathcal{C}^2I))$
\State Compute the updated parameter $\theta_i'$: 
\State $\theta_i' = \theta - \alpha \hat{\nabla}_{\theta} L_i(\theta)$
\EndFor
\EndFor
\State Compute the meta-gradient over all tasks:
\State $\nabla_{\theta_i}L_i(\theta'_{i}) = \sum_{(x_i, y_i) \in \mathcal{D}_{T_i}\sim\mathcal{D}} \mathcal{L}(f_{\theta'}(x_i), y_i)$
    \State Update the meta-parameters: 
    \State $\theta \leftarrow \theta - \beta\nabla_{\theta_i}L_i(\theta'_{i})$
    \State Update the learning rate:
    \State $\alpha \leftarrow \alpha - \beta\nabla_{\theta_i}L_i(\theta'_{i})$
    \State Update the clipping norm:
    \State $\mathcal{C} \leftarrow \mathcal{C} - \beta\nabla_{\theta_i}L_i(\theta'_{i})$
    \EndFor
    \Ensure $\theta_{t}$ and compute the overall privacy cost ($\varepsilon$, $\delta$) using a privacy accounting method.
\end{algorithmic}
 \end{algorithm}


\section{Theoretical Analysis}
In this section, we undertake a comprehensive theoretical analysis of DP meta-learning algorithms, DP-MAML, DP-Reptile, and DP-Meta-SGD, all enhanced with the privacy-preserving Meta-Clip mechanism. We explore privacy and convergence, shedding light on the algorithms' guarantees and meta-learned model behavior. 

\begin{figure*}[!ht]
\centering
\includegraphics[width=0.7\textwidth]{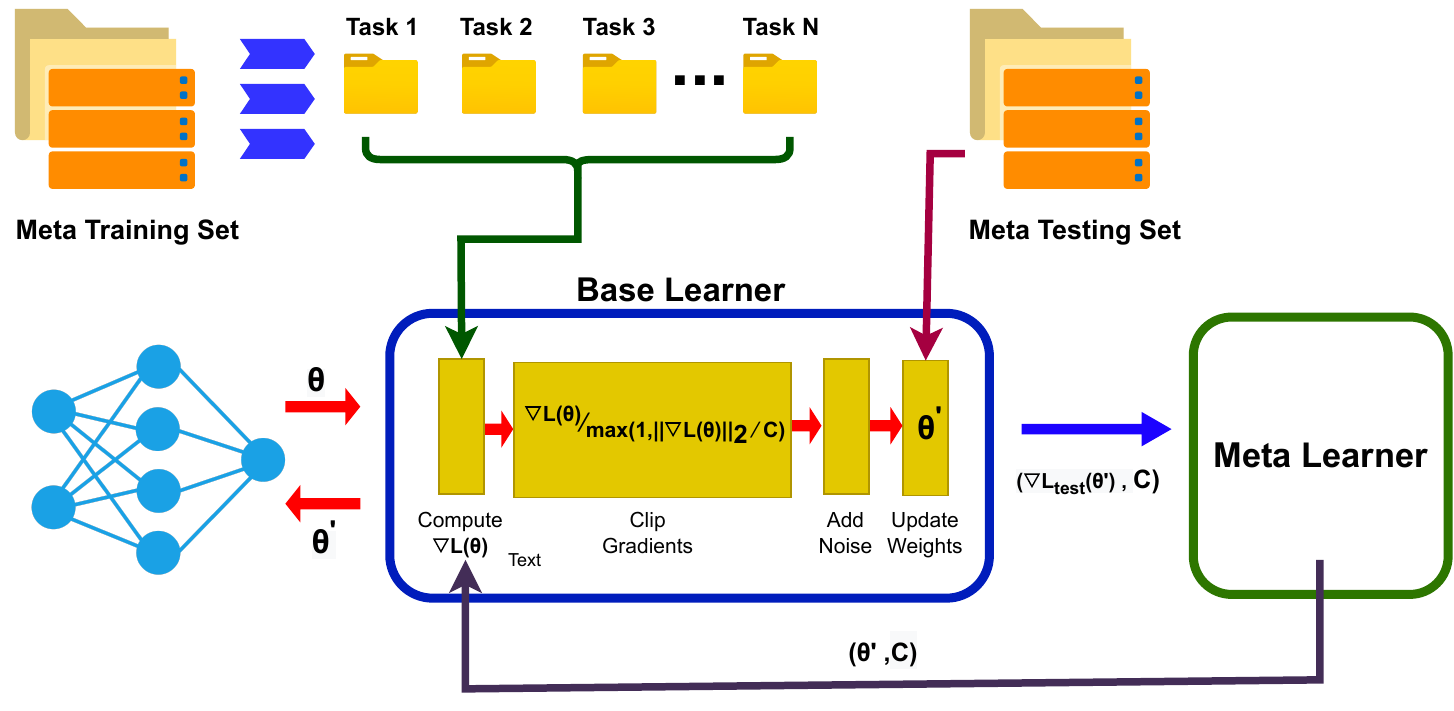}
\caption{llustration of the MAML algorithm enhanced with Meta-Clip.}
\label{metaclip}
\end{figure*}

\subsection{Privacy Analysis}

In this subsection, we present a unified privacy analysis for three meta-learning algorithms: DP-MAML, DP-Reptile, and DP-Meta-SGD, all enhanced with the Meta-Clip privacy-preserving approach. Leveraging the RDP Accountant, we rigorously assess the privacy guarantees achieved by these algorithms in terms of the privacy parameters $\varepsilon$ and $\delta$.

\subsubsection{Privacy Loss Random Variable}

We introduce the privacy loss random variable $\varepsilon_t$, representing the privacy loss incurred during the $t$-th iteration of DP-MAML, DP-Reptile, and DP-Meta-SGD with Meta-Clip. This variable quantifies the DP guarantees associated with individual iterations across all three algorithms.

\subsubsection{Rényi Differential Privacy}

Renyi Differential Privacy (RDP), an extension of standard DP, provides a family of privacy measures parameterized by the Rényi divergence parameter $\zeta \geq 0$. Given neighboring datasets $D_t$ and $D_t'$ and corresponding probability distributions $p_t$ and $p_t'$, the $\zeta$-Rényi divergence between $p_t$ and $p_t'$ is defined as:
\[
D_{\zeta}(p_t \| p_t') = \frac{1}{\zeta - 1} \log\left(\sum_{z} p_t(z)^{\zeta} p_t'(z)^{1 - \zeta}\right).
\]

\subsubsection{Rényi Differential Privacy Composition}

Utilizing RDP, we analyze the cumulative privacy guarantees achieved by DP-MAML, DP-Reptile, and DP-Meta-SGD with Meta-Clip in terms of privacy budget consumption. The cumulative privacy loss $\varepsilon_{\text{cumulative}}$ is computed by summing the Rényi divergences across all iterations:
\[
\varepsilon_{\text{cumulative}} = \sum_{t=1}^{T} D_{\zeta}(\mathcal{M}(D_t) \| \mathcal{M}(D_t')).
\]

\subsubsection{Privacy Parameters Computation} 

To compute the privacy parameters $\varepsilon$ and $\delta$, we leverage the RDP Accountant to estimate the RDP guarantees based on cumulative privacy loss $\varepsilon_{\text{cumulative}}$. The RDP Accountant offers a systematic methodology for deriving the privacy parameters using the Rényi divergence values, ensuring robust and verifiable DP guarantees across DP-MAML, DP-Reptile, and DP-Meta-SGD with Meta-Clip.

By adopting the RDP Accountant for privacy analysis, we establish that these three meta-learning algorithms, in conjunction with the Meta-Clip mechanism, provide strong and consistent DP guarantees. The privacy parameters $\varepsilon$ and $\delta$ enable a comprehensive evaluation of the trade-off between privacy preservation and utility, ensuring the stringent protection of individual data in the meta-learning.

\begin{figure*}[!h]
\centering
\includegraphics[width=0.9\textwidth]{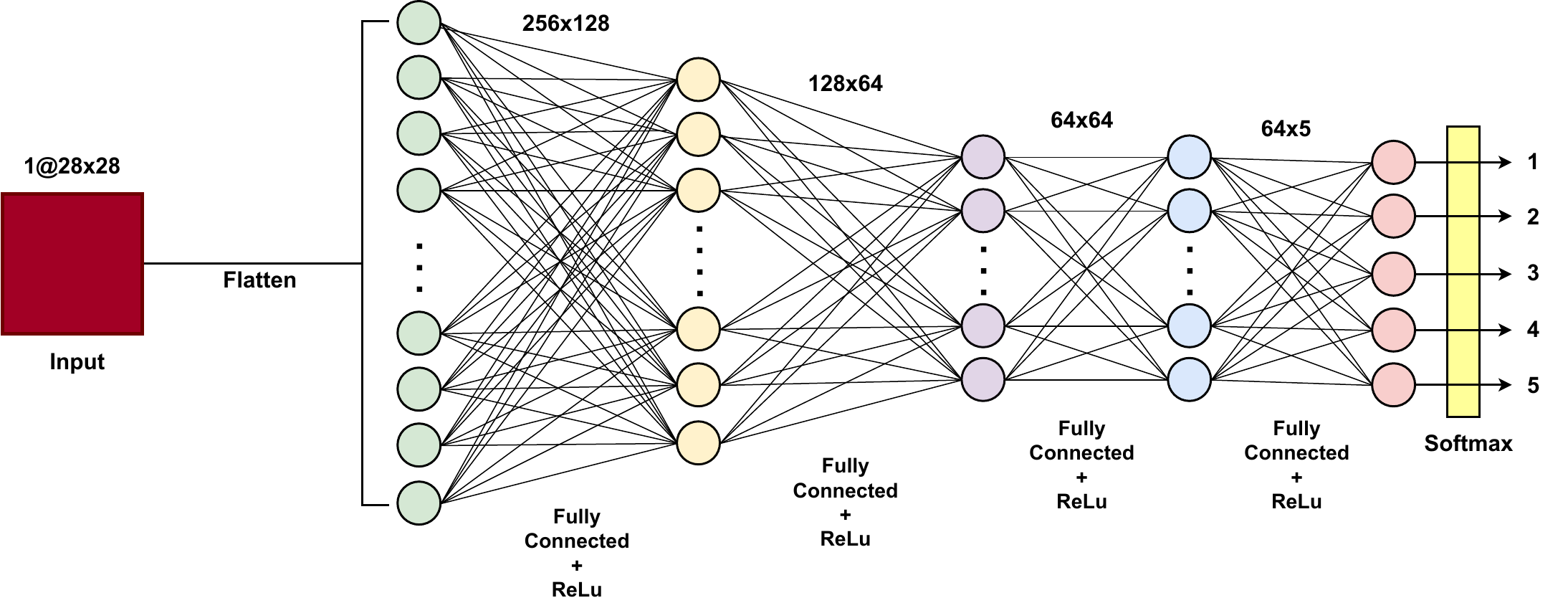}
\caption{Architecture of the Neural Network used to train on Omniglot dataset, consisting of multiple fully connected layers.}
\label{omni_cnn}
\end{figure*}

\begin{figure*}[!h]
\centering
\includegraphics[width=\textwidth]{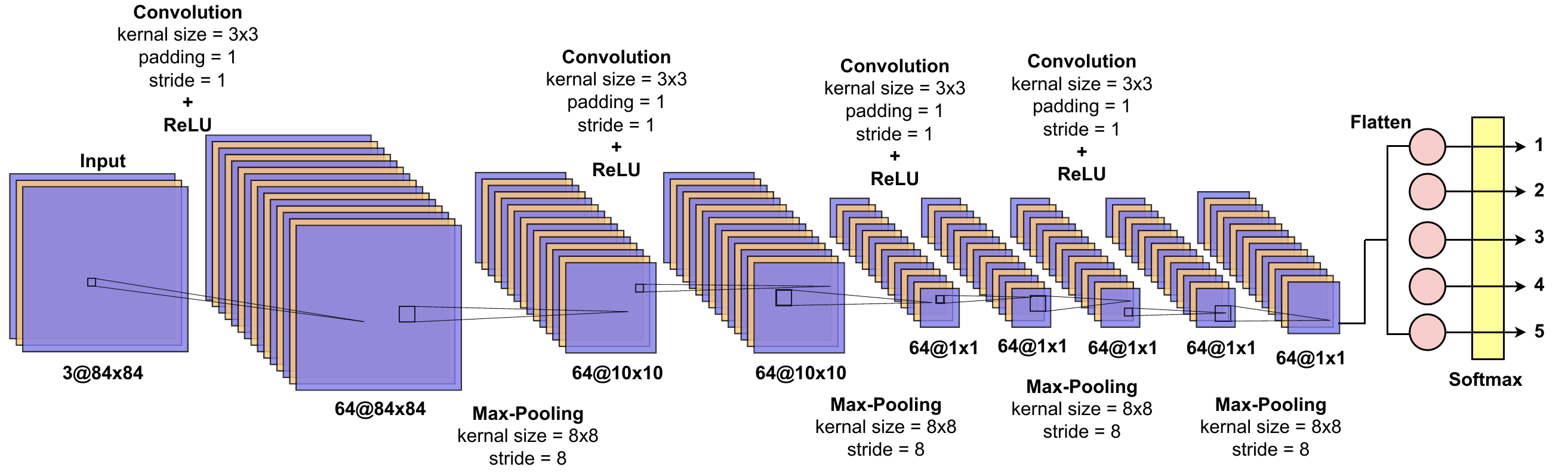}
\caption{Architecture of the Convolutional Neural Network (CNN) used to train on Mini Imagenet dataset, consisting of multiple convolutional layers followed by pooling layers.}
\label{miniim_cnn}
\end{figure*}


\subsection{Convergence Analysis}

This section focuses on the dynamic modification of the clipping norm and our theoretical analysis of the convergence qualities of the suggested DP-Meta learning algorithms in conjunction with Meta Clip. A unified convergence analysis can be applied to all three  DP meta-learning algorithms, including DP-MAML, DP-Reptile, and DP-MetaSGD, given their shared foundational principles and commonalities in the objective formulation. These algorithms typically aim to minimize the expected loss across tasks while ensuring differential privacy during the learning process.

\begin{equation}
\label{eq:m1}
    \min l(\theta) = \mathbb{E}_{k \in T_k}[l_k(\theta)].
\end{equation}

The loss function $l_k$ for task $k$ is typically specified in learning applications as a projected loss with regard to the probability distribution that produces the data for task $k$.

In order to keep things simple, we concentrate on determining an initialization $\theta$ that would result in a fair approximation for the minimizer of $l_k(\theta)$ after one gradient step of observation of a new task $k$. We can state this objective as;

\begin{equation}
\label{eq:mini}
   \min L(\theta) = \mathbb{E}_{k \in T_k}[L_k(\theta)] = \mathbb{E}_{k \in T_k}[l_k(\theta - \alpha\nabla l_k(\theta))],
\end{equation}

where $L_k(\theta) = l_k(\theta-\alpha\nabla l_k(\theta)).$

Essentially, the optimal solution for Problem \ref{eq:mini} is made to perform well on average when observing a task and examining the output after a single epoch. 

In order to simplify the calculations, we use the definition that follows:

\begin{definition}
We term a random vector $\theta{\mu}$ as a $\mu$-approximate first-order stationary point (FOSP) for Problem \ref{eq:mini} if it meets the condition $\mathbb{E}[\|\nabla L(\theta_\mu)\|] \leq \mu$.
\end{definition}

We seek to answer a crucial question in this section: Is it possible to find a $\mu$-FOSP for any given $\mu > 0$? If so, how many iterations are needed to reach this point? We first state our fundamental assumptions clearly before diving into these questions.

\begin{enumerate}
    \item The loss \(L(\cdot)\) is bounded, ensuring that \(\min F(\theta) > -\infty\), and defining \(\Delta = (L(\theta_0) - \min_{w\in\mathbb{R}^d})\).
    
    \item The functions \(L_k(\theta)\) are Lipschitz continuous and twice differentiable. This implies the existence of a constant \(\lambda_k\) such that \(\|\nabla l_k(\theta)-\nabla l_k(\theta')\| \le \lambda_k\|\theta-\theta'\|\). For ease of analysis, we often use \(\lambda = \max_k \lambda_k\) as a parameter representing the Lipschitz continuity of the gradients \(\nabla l_k\) for all \(k \in (0,K)\).
    
    \item \(\|\nabla^2 l_k(\theta)-\nabla^2 l_k(\theta')\| \leq \tau_k\|\theta-\theta'\|\) is the Hessian \(\nabla^2l_k\)-Lipschitz continuous for every \(k \in (0,K)\). Similar to the last point, for all \(k \in (0,K)\), we frequently utilise \(\tau = \max_k \tau_k\) as a parameter for the Lipschitz continuity of the gradients \(\nabla l_k\).
    
    \item The gradient \(\nabla l_k(\theta)\) exhibits bounded variance, ensuring that \(\mathbb{E}_i\|\nabla l(\theta) - \nabla l_k(\theta)\|^2 \leq \varphi^2\), where \(l(\theta) = \mathbb{E}_{k \in T_k}\nabla l_k(\theta)\).
    
    \item The stochastic gradients \(\nabla l_k(\theta)\) and Hessians \(\nabla^2 l_k(\theta)\) have bounded variance for all \(k\) and any \(\theta \in \mathbb{R}^d\). In particular, 
    \begin{equation}
        \mathbb{E}[\|\nabla l_k(\theta)-\nabla l_k(\theta)\|^2] \leq \widehat{\varphi}^2,
    \end{equation}

    \begin{equation}
        \mathbb{E}[\|\nabla^2 l_k(\theta)-\nabla^2 l_k(\theta)\|^2] \leq \varphi^2_H.
    \end{equation}
    Where, \(\widehat{\varphi}, \varphi_H > 0\).
\end{enumerate}

First, note that the Lipschitz continuity assumption \(\lambda_k\) also implies that the following inequality applies for every \(\theta, \theta' \in \mathbb{R}^d\) given our assumption that the functions \(l_k\) are twice differentiable:

\begin{equation}
\label{eq:90}
    -\lambda_k I_d \leq \nabla^2l_k(\theta) \leq \lambda_k\lambda_d. 
\end{equation}

Additionally, to ensure task-level privacy in individual client training, we use the following notation to denote the insertion of DP noise for simplicity's sake:
\begin{equation}
\nabla l_k(\theta) = \nabla l_k(\theta) + \mathcal{N}(0, \alpha^2\mathcal{C}^2I)
\end{equation}

In the DP-Meta learning algorithm, stochastic gradient descent minimises the global loss function \(L\), which may not be smooth over \(\mathbb{R}^d\) and whose smoothness parameter may be infinite. We formally state the parameter characterising the Lipschitz continuity of the gradients \(\nabla L\) in the following lemma.

\begin{lemma}
\label{lem:1}
    Let \(L\) be the objective function defined in \ref{eq:mini}, assuming \(\alpha \in [0, \frac{1}{\lambda}]\). For any \(\theta, \theta' \in \mathbb{R}_d\), the following inequality holds:

    \begin{equation}
        \|\nabla L(\theta) - \nabla L(\theta')\| \leq \min[\lambda(\theta),\lambda(\theta')]\|\theta-\theta'\|,
    \end{equation}

    where \(\lambda(\theta) = 4\lambda + 2\tau\alpha \mathbb{E}_{k \in T_k} \|\nabla l_k(\theta)\|\).
\end{lemma}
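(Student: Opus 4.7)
The plan is to compute $\nabla L(\theta)$ explicitly via the chain rule and exploit the product structure that emerges. Since $L_k(\theta) = l_k(\theta - \alpha \nabla l_k(\theta))$, differentiation gives
\[
\nabla L_k(\theta) = \bigl(I - \alpha \nabla^2 l_k(\theta)\bigr)\, \nabla l_k\bigl(\theta - \alpha \nabla l_k(\theta)\bigr).
\]
I would introduce the abbreviations $A_k(\theta) := I - \alpha \nabla^2 l_k(\theta)$ and $B_k(\theta) := \nabla l_k(\theta - \alpha \nabla l_k(\theta))$, so that $\nabla L_k(\theta) = A_k(\theta) B_k(\theta)$, and use the standard "add and subtract" trick
\[
\nabla L_k(\theta) - \nabla L_k(\theta') = A_k(\theta)\bigl[B_k(\theta) - B_k(\theta')\bigr] + \bigl[A_k(\theta) - A_k(\theta')\bigr] B_k(\theta').
\]

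Next I would bound each of the four pieces using the assumptions. From the spectral bound $-\lambda_k I \preceq \nabla^2 l_k \preceq \lambda_k I$ in \eqref{eq:90} together with $\alpha \leq 1/\lambda$, the operator $A_k(\theta)$ satisfies $\|A_k(\theta)\| \leq 1 + \alpha\lambda \leq 2$. The Hessian-Lipschitz assumption gives $\|A_k(\theta) - A_k(\theta')\| \leq \alpha\tau\|\theta - \theta'\|$. Applying gradient-Lipschitzness twice yields
\[
\|B_k(\theta) - B_k(\theta')\| \leq \lambda\bigl(\|\theta-\theta'\| + \alpha\|\nabla l_k(\theta) - \nabla l_k(\theta')\|\bigr) \leq \lambda(1+\alpha\lambda)\|\theta-\theta'\| \leq 2\lambda\|\theta-\theta'\|.
\]
For $\|B_k(\theta')\|$ I would add and subtract $\nabla l_k(\theta')$ and again use gradient-Lipschitzness with $\alpha\lambda\leq 1$ to obtain $\|B_k(\theta')\| \leq (1+\alpha\lambda)\|\nabla l_k(\theta')\| \leq 2\|\nabla l_k(\theta')\|$.

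Combining these bounds gives, for each task $k$,
\[
\|\nabla L_k(\theta) - \nabla L_k(\theta')\| \leq \bigl(4\lambda + 2\tau\alpha\|\nabla l_k(\theta')\|\bigr)\|\theta - \theta'\|.
\]
Taking expectation over $k$ and applying Jensen's inequality to pass the norm inside the expectation produces $\|\nabla L(\theta) - \nabla L(\theta')\| \leq \lambda(\theta')\|\theta-\theta'\|$, and exchanging the roles of $\theta$ and $\theta'$ symmetrically gives the same bound with $\lambda(\theta)$; taking the minimum yields the claim.

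The main obstacle I anticipate is controlling $\|B_k(\theta')\|$ in a way that produces the $\|\nabla l_k(\theta')\|$ factor appearing in $\lambda(\theta')$, rather than an uninformative global upper bound. The key is to resist bounding $B_k(\theta')$ by Lipschitzness of $l_k$ on all of $\mathbb{R}^d$ (which the assumptions do not give us) and instead expand $B_k(\theta') = \nabla l_k(\theta') + [\nabla l_k(\theta' - \alpha\nabla l_k(\theta')) - \nabla l_k(\theta')]$ so that the gradient-Lipschitz hypothesis converts the perturbation into a multiple of $\|\nabla l_k(\theta')\|$. The condition $\alpha \leq 1/\lambda$ is used precisely to keep the resulting constant $(1+\alpha\lambda)$ bounded by $2$, which is what yields the clean coefficients $4\lambda$ and $2\tau\alpha$ in the final expression.
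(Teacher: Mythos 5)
Your proposal is correct and follows essentially the same route as the paper's proof: both differentiate $L_k$ via the chain rule to get the product $(I-\alpha\nabla^2 l_k(\theta))\,\nabla l_k(\theta-\alpha\nabla l_k(\theta))$, apply the add-and-subtract decomposition to the difference of products, bound the Hessian increment by $\alpha\tau\|\theta-\theta'\|$, and control $\|\nabla l_k(\theta'-\alpha\nabla l_k(\theta'))\|$ by $(1+\alpha\lambda)\|\nabla l_k(\theta')\|$ (the paper via the mean value theorem, you via an equivalent add-and-subtract with gradient-Lipschitzness), arriving at the same constants $4\lambda$ and $2\tau\alpha$ after using $\alpha\lambda\le 1$. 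The symmetry-plus-minimum step at the end also matches the paper's conclusion.
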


\begin{proof}
    
See Appendix~\ref{sec:app1}

\end{proof}

The objective function $L$ is shown to be smooth by the previously stated lemma, and this smoothness parameter depends on the predicted gradient norm. To put it another way, the smoothness parameter is affected by $\min[\mathbb{E}_{k \in T_k} \|\nabla l_k(\theta)\|, \mathbb{E}_{k \in T_k} \|\nabla l_k(\theta')\|]$ when evaluating the smoothness of gradients of $\theta$ and $\theta'$.

\begin{lemma}
For the case where $\alpha \in (0, \dfrac{1}{\lambda})$, let's consider $L$ in \ref{eq:mini}. Then, the following holds:
\label{lem:2}
\begin{equation}
\begin{aligned}
\label{eq:55}
    &\mathbb{E}_{\mathcal{D}_{s}\mathcal{D}_q}[\widehat{\nabla}l_k(\theta_t-\alpha\widehat{\nabla}l_k(\theta_t,\mathcal{D}^k_{s}),\mathcal{D}_q^k)| \mathcal{F}_t] \\&= \nabla l_k(\theta_t-\alpha\nabla l_k(\theta_t)) + e_{k,t},
\end{aligned}
\end{equation}

where, $\|e_{k,t}\| \leq \dfrac{\alpha\lambda\widehat{\alpha}}{\sqrt{\mathcal{D}_{s}}}$.

Furthermore, for any arbitrary $\phi > 0$, we get

\begin{equation}
\label{eq:56}
\begin{aligned}
    &\mathbb{E}_{\mathcal{D}_{s}\mathcal{D}_q}[\|\widehat{\nabla}l_k(\theta_t-\alpha\widehat{\nabla}l_k(\theta_t,\mathcal{D}^k_{s}),\mathcal{D}_q^k)\|^2| \mathcal{F}_t] \\& \leq (1+\dfrac{1}{\phi})\|\nabla l_k(\theta_t-\alpha\nabla l_k(\theta_t))\|^2 + \dfrac{(1+\phi)\alpha^2\lambda^2\widehat{\alpha}^2}{\mathcal{D}_{s}} + \dfrac{\widehat{\alpha}}{\mathcal{D}_q}.
\end{aligned}
\end{equation}
\end{lemma}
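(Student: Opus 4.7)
The plan is to prove both parts of Lemma~\ref{lem:2} by iterated expectation, using the support draw $\mathcal{D}_s^k$ and query draw $\mathcal{D}_q^k$ as two independent sources of randomness conditional on $\mathcal{F}_t$, and then transferring the resulting errors through the inner gradient step via the Lipschitz continuity assumed in Assumption~2.

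For the bias identity~\eqref{eq:55}, I would first condition on $\mathcal{D}_s^k$ and take the inner expectation over $\mathcal{D}_q^k$. Since the mini-batch query gradient is an unbiased estimator of $\nabla l_k$ at any fixed argument, this collapses the outer stochastic gradient to $\nabla l_k(\theta_t - \alpha\,\widehat{\nabla} l_k(\theta_t,\mathcal{D}_s^k))$. It then remains to take expectation over $\mathcal{D}_s^k$ and compare to $\nabla l_k(\theta_t - \alpha\,\nabla l_k(\theta_t))$. Setting
\[
e_{k,t} \;=\; \mathbb{E}_{\mathcal{D}_s}\!\bigl[\nabla l_k(\theta_t-\alpha\widehat{\nabla}l_k(\theta_t,\mathcal{D}_s^k))\bigr] - \nabla l_k(\theta_t-\alpha\nabla l_k(\theta_t)),
\]
I would bound $\|e_{k,t}\|$ by pulling the norm inside via Jensen, using the $\lambda$-Lipschitz property of $\nabla l_k$ (Assumption~2) to get $\|e_{k,t}\| \le \alpha\lambda\,\mathbb{E}_{\mathcal{D}_s}\|\widehat{\nabla}l_k(\theta_t,\mathcal{D}_s^k)-\nabla l_k(\theta_t)\|$, and then applying Jensen once more together with the mini-batch variance bound of order $\widehat{\alpha}^2/\mathcal{D}_s$ inherited from Assumption~5 to get the stated $\alpha\lambda\widehat{\alpha}/\sqrt{\mathcal{D}_s}$ ceiling.

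For the second-moment bound~\eqref{eq:56}, I would add and subtract the deterministic mean and write $\widehat{\nabla}l_k(\theta_t-\alpha\widehat{\nabla}l_k(\theta_t,\mathcal{D}_s^k),\mathcal{D}_q^k) = \nabla l_k(\theta_t-\alpha\nabla l_k(\theta_t)) + R_s + R_q$, where $R_s$ captures the error produced by using $\widehat{\nabla}l_k(\theta_t,\mathcal{D}_s^k)$ in place of $\nabla l_k(\theta_t)$ in the argument, and $R_q$ is the query sampling noise at the inner-updated parameter. I would then apply the Young-type inequality $\|a+b\|^2 \le (1+1/\phi)\|a\|^2+(1+\phi)\|b\|^2$ with $a$ the deterministic meta-gradient and $b = R_s + R_q$. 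The cross term $R_s$ is controlled exactly as in part~(i) via Lipschitz smoothness of $\nabla l_k$, producing the $(1+\phi)\alpha^2\lambda^2\widehat{\alpha}^2/\mathcal{D}_s$ term; the cross term $R_q$ is orthogonal in expectation to $R_s$ once we condition on $\mathcal{D}_s^k$, and its second moment is bounded by $\widehat{\alpha}/\mathcal{D}_q$ directly from the query mini-batch variance. Summing the two gives the bound claimed.

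The main obstacle is the nested stochasticity: the support sample perturbs the point at which the (stochastic) outer gradient is evaluated, so $R_s$ and $R_q$ are not independent in a naive way. The clean way around this is to always peel expectations in the order $\mathbb{E}_{\mathcal{D}_s}[\mathbb{E}_{\mathcal{D}_q}[\,\cdot\,\mid\mathcal{D}_s,\mathcal{F}_t]\mid\mathcal{F}_t]$, so that for each frozen $\mathcal{D}_s^k$ the inner mini-batch is unbiased with variance $\widehat{\alpha}/\mathcal{D}_q$ about $\nabla l_k$ evaluated at the updated point, and only then invoke Lipschitzness of $\nabla l_k$ (together with the range $\alpha\in(0,1/\lambda)$ used to keep the inner update contractive in the sense needed later) to push the residual $R_s$ back onto the norm $\|\widehat{\nabla}l_k(\theta_t,\mathcal{D}_s^k)-\nabla l_k(\theta_t)\|$, where Assumption~5 can finally be applied. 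Assumption~3 on Hessian Lipschitzness is not needed at this level — it will only enter the subsequent convergence theorem where the meta-gradient itself must be related to $\nabla L$.
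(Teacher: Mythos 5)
Your proposal follows essentially the same route as the paper's proof: peel the query expectation first (conditional unbiasedness collapses the outer stochastic gradient to $\nabla l_k$ at the perturbed point), define $e_{k,t}$ as the residual expectation, and bound it via Jensen, the $\lambda$-Lipschitz gradient assumption, and the $\widehat{\varphi}^2/\mathcal{D}_s$ mini-batch variance; the second moment likewise combines the Young inequality $\|a+b\|^2\le(1+\tfrac{1}{\phi})\|a\|^2+(1+\phi)\|b\|^2$ with the same Lipschitz transfer. One execution detail to be careful with: applying Young with $b=R_s+R_q$ as literally written would place a $(1+\phi)$ factor on the query-variance term, yielding $(1+\phi)\widehat{\varphi}^2/\mathcal{D}_q$ rather than the stated $\widehat{\varphi}^2/\mathcal{D}_q$; the paper avoids this by first using the conditional unbiasedness of the query gradient given $\mathcal{D}_s^k$ to extract the query variance additively (a bias--variance identity), and only then applying Young to the remaining conditional-mean term --- which is exactly the orthogonality you already note, so you only need to reorder those two steps.
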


\begin{proof}
See appendix~\ref{sec:app2}
\end{proof}

Lemma~\ref{lem:2} provides an explanation of why $g_k$, the descent direction of DP-Meta learning, functions as a biased estimator of $\nabla L(\theta_t)$. It implies that the bias is limited by a constant that depends on the variance of stochastic gradients $\widehat{\nabla}l_k$ and the stepsize $\alpha$ for the inner steps.  As suggested by our result in \ref{eq:55}, the vector $\widehat{\nabla}l_k(\theta_t-\alpha \widehat{\nabla}l_k(\theta_t,\mathcal{D}^k_{s}),\mathcal{D}_q^k)$ becomes an unbiased estimate of $\nabla l_k(\theta_t-\alpha\widehat{\nabla}l_k(\theta_t,\mathcal{D}^k_{s})$ when $\alpha$ is set to 0. Further, the outcome presented in \ref{eq:56} indicates that the second moment of $\widehat{\nabla}l_k(\theta_t-\alpha\widehat{\nabla}l_k(\theta_t,\mathcal{D}^k_{s}),\mathcal{D}_q^k)$ is bounded above by the product of a factor involving $\|\nabla l_k(\theta_t-\alpha\widehat{\nabla}l_k(\theta_t))\|^2$ and a factor involving $\widehat{\alpha}^2$.

We offer the following to clarify the interactions between the gradients and provide an understanding of how they should behave.

\begin{lemma}
\label{lem:3}

Examine the formulations of $l(\cdot)$ in \ref{eq:m1} and $L(\cdot)$ in \ref{eq:mini} under the condition $\alpha \in [0, \dfrac{\sqrt{2}-1}{\lambda})$. Subsequently, for any $\theta \in \mathcal{R}^d$, we observe

    \begin{equation}
    \label{eq:m8}
        \|\nabla l(\theta)\|\leq O_1\|\nabla L(\theta)\|+O_2\varphi,
    \end{equation}
    \begin{equation}
    \begin{aligned}
    \label{eq:m9}
        \mathbb{E}_{k \in T_k}[\|\nabla L_k(\theta)\|^2]&\leq 2(1+\alpha\lambda)^2O_1^2\|\nabla L(\theta)\|^2\\&+(1+\alpha\lambda)^2(2O_2^2+1)\varphi^2,
    \end{aligned}
    \end{equation}

    where,

    $O_1=\dfrac{1}{1-2\alpha\lambda-\alpha^2\lambda^2},  O_2=\dfrac{2\alpha\lambda+\alpha^2\lambda^2}{1-2\alpha\lambda-\alpha^2\lambda^2}$.
\end{lemma}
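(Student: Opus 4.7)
The plan is to exploit the chain-rule identity
\begin{equation}
\nabla L_k(\theta) = \bigl(I - \alpha \nabla^2 l_k(\theta)\bigr)\, \nabla l_k\bigl(\theta - \alpha \nabla l_k(\theta)\bigr),
\end{equation}
which follows directly from $L_k(\theta) = l_k(\theta - \alpha\nabla l_k(\theta))$. From the Hessian spectral bound \ref{eq:90} I get $\|I - \alpha\nabla^2 l_k(\theta)\| \leq 1+\alpha\lambda$, while the Lipschitzness of $\nabla l_k$ in Assumption~2 supplies both $\|\nabla l_k(\theta - \alpha\nabla l_k(\theta)) - \nabla l_k(\theta)\| \leq \alpha\lambda\|\nabla l_k(\theta)\|$ and $\|\nabla l_k(\theta - \alpha\nabla l_k(\theta))\| \leq (1+\alpha\lambda)\|\nabla l_k(\theta)\|$. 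These two building blocks drive both halves of the lemma.

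For \ref{eq:m8}, I would decompose
\begin{equation}
\nabla L_k(\theta) - \nabla l_k(\theta) = \bigl[\nabla l_k(\theta - \alpha\nabla l_k(\theta)) - \nabla l_k(\theta)\bigr] - \alpha\nabla^2 l_k(\theta)\,\nabla l_k(\theta - \alpha\nabla l_k(\theta)),
\end{equation}
apply the triangle inequality and the two building blocks, and conclude $\|\nabla L_k(\theta) - \nabla l_k(\theta)\| \leq (2\alpha\lambda + \alpha^2\lambda^2)\|\nabla l_k(\theta)\|$. Taking the task expectation, applying Jensen to Assumption~4 to get $\mathbb{E}_{k}\|\nabla l_k(\theta)\| \leq \|\nabla l(\theta)\| + \varphi$, and using the triangle inequality for $\nabla l(\theta)-\nabla L(\theta)=\mathbb{E}_k[\nabla l_k(\theta)-\nabla L_k(\theta)]$, I obtain $\|\nabla l(\theta) - \nabla L(\theta)\| \leq (2\alpha\lambda + \alpha^2\lambda^2)(\|\nabla l(\theta)\| + \varphi)$. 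Rearranging and using the hypothesis $\alpha < (\sqrt{2}-1)/\lambda$, which makes $1 - 2\alpha\lambda - \alpha^2\lambda^2$ strictly positive, isolates $\|\nabla l(\theta)\|$ and delivers the claimed bound with $O_1$ and $O_2$.

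For \ref{eq:m9}, I would begin from $\|\nabla L_k(\theta)\|^2 \leq (1+\alpha\lambda)^2\|\nabla l_k(\theta - \alpha\nabla l_k(\theta))\|^2$, take the task expectation, and bound $\mathbb{E}_k\|\nabla l_k(\theta - \alpha\nabla l_k(\theta))\|^2$ by inserting and subtracting $\nabla l(\theta)$ through $\|a+b\|^2 \leq 2\|a\|^2 + 2\|b\|^2$. The deviation term is handled by Lipschitzness plus Assumption~4, leaving a bound of the form $C_1\|\nabla l(\theta)\|^2 + C_2\varphi^2$. Substituting \ref{eq:m8} (squared, via $(a+b)^2 \leq 2a^2 + 2b^2$) then replaces $\|\nabla l(\theta)\|^2$ by $2O_1^2\|\nabla L(\theta)\|^2 + 2O_2^2\varphi^2$, and the residual $\varphi^2$ contributed by the task-variance supplies the $+1$ inside $(2O_2^2+1)$. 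The external $(1+\alpha\lambda)^2$ factor is inherited unchanged from the Jacobian bound on $I-\alpha\nabla^2 l_k$.

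The main obstacle is the constant bookkeeping in the second inequality: a naive chain that applies Lipschitzness a second time to bound $\|\nabla l_k(\theta-\alpha\nabla l_k(\theta))\|$ by $(1+\alpha\lambda)\|\nabla l_k(\theta)\|$ would inflate the stated prefactor to $(1+\alpha\lambda)^4$. Avoiding this forces the decomposition to route through $\nabla l(\theta)$, using \ref{eq:m8} as the bridge to $\nabla L(\theta)$, rather than cascading Lipschitz bounds through $\nabla l_k(\theta)$. Ensuring that $\alpha < (\sqrt{2}-1)/\lambda$ is invoked only where the denominator $1 - 2\alpha\lambda - \alpha^2\lambda^2$ is inverted keeps the analysis consistent with the smoothness range used in Lemma~\ref{lem:1}.
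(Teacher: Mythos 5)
Your proof of \ref{eq:m8} is essentially the paper's. Both arguments reduce to the bound $\|\nabla L_k(\theta)-\nabla l_k(\theta)\|\leq(2\alpha\lambda+\alpha^2\lambda^2)\|\nabla l_k(\theta)\|$: the paper obtains it by writing $\nabla L_k(\theta)=A_k(\theta)A_k(\widehat{\theta}_k)\nabla l_k(\theta)$ via the mean value theorem and bounding $\|I-A_k(\theta)A_k(\widehat{\theta}_k)\|\leq 2\alpha\lambda+\alpha^2\lambda^2$, while your two-term decomposition with the Lipschitz bounds yields the identical constant. Both then pass to the expectation with $\mathbb{E}_{k\in T_k}\|\nabla l_k(\theta)\|\leq\|\nabla l(\theta)\|+\varphi$ and rearrange using $1-2\alpha\lambda-\alpha^2\lambda^2>0$, which is exactly where the hypothesis $\alpha<(\sqrt{2}-1)/\lambda$ enters. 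This half is correct and matches the paper's constants.

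For \ref{eq:m9} the paper actually takes the ``naive chain'' you set out to avoid: it bounds $\mathbb{E}_{k\in T_k}[\|\nabla L_k(\theta)\|^2]$ by $\|A_k(\theta)\|^2\|A_k(\widehat{\theta}_k)\|^2\,\mathbb{E}_{k\in T_k}[\|\nabla l_k(\theta)\|^2]$, uses $\mathbb{E}_{k\in T_k}\|\nabla l_k(\theta)\|^2\leq\|\nabla l(\theta)\|^2+\varphi^2$, and substitutes the square of \ref{eq:m8}. You are right that this chain, tracked carefully, produces a prefactor of $(1+\alpha\lambda)^4$ rather than the stated $(1+\alpha\lambda)^2$ (the appendix in fact writes $(1+\alpha\lambda)$ to the first power, so the paper's own constants are internally inconsistent with the lemma statement). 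However, your proposed remedy does not land on the stated constants either: once you split $\nabla l_k(\theta-\alpha\nabla l_k(\theta))$ around $\nabla l(\theta)$ using $\|a+b\|^2\leq2\|a\|^2+2\|b\|^2$, the deviation term still carries $\alpha\lambda\|\nabla l_k(\theta)\|$, and after taking expectations the coefficient of $\|\nabla L(\theta)\|^2$ comes out at least $4(1+\alpha\lambda)^2O_1^2$ (an extra factor of $2$ from the split, plus $O(\alpha^2\lambda^2)$ corrections), not the claimed $2(1+\alpha\lambda)^2O_1^2$. So both your route and the paper's prove an inequality of the stated \emph{form}, and the discrepancy is confined to absolute constants that do not affect how the lemma is used in Lemma~\ref{lem:4}; but you should not present the routing through $\nabla l(\theta)$ as recovering the exact $(1+\alpha\lambda)^2$ prefactor, and you would need to state explicitly which (slightly weaker) constants your version actually delivers.
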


\begin{proof}
   See Appendix~\ref{sec:app3}
 \end{proof}

Next, we look into analyzing the overall complexity of DP-Meta learning in its attempt to derive an $\mu$-FOSP for the loss functions $L$ outlined in \ref{eq:mini}.

\begin{lemma}
Consider L in \ref{eq:mini} for the case that $\alpha \in (0, \dfrac{1}{10\lambda}]$. Consider running DP-Meta learning with batch sizes satisfying $D_h \leq [2\alpha^2\varphi^2_H]$ and $T_k \leq 20$. Since $\widehat{\beta}(\theta)$, let $\beta_t = \widehat{\beta}(\theta_t)/18$. Next, DP-Meta learning determines a solution $\theta_{\mu}$ for any $\mu > 0$, such that
\label{lem:4}

\begin{equation}
\label{eq:lem2}
\begin{aligned}
    \mathbb{E}&[\|\nabla L(\theta_\mu)\|]\\&\geq \max\{\sqrt{14(1+\dfrac{\tau\alpha}{\lambda}\varphi)(\varphi^2(\dfrac{1}{T_k}+20\alpha^2\lambda^2)+\dfrac{\widehat{\varphi}^2}{T_k\mathcal{D}_q}+\dfrac{\widehat{\varphi}^2}{\mathcal{D}_{s}})} \\& ,\dfrac{14\tau\alpha}{\lambda}(\alpha^2(\dfrac{1}{T_k}+20\alpha^2\lambda^2)+\dfrac{\widehat{\varphi}^2}{T_k\mathcal{D}_q}+\dfrac{\widehat{\varphi}^2}{T_k\mathcal{D}_{s}}),\mu \},
\end{aligned}
\end{equation}

following maximum iterations of

 \begin{equation}
 \begin{aligned}
 \label{eq:lem3}
     &\mathcal{O}(1)\triangle \min\{\dfrac{\lambda+\tau\alpha(\varphi+\mu)}{\mu^2},\dfrac{\lambda}{\varphi^2(\dfrac{1}{T_k}+20\alpha^2\lambda^2)}\\&+\dfrac{\lambda(T_k\mathcal{D}_q+\mathcal{D}_{s})}{\widehat{\varphi}^2}\}
 \end{aligned}
 \end{equation}

iterations, where $\triangle=(L(\theta_0)-\min_{\theta\in \mathcal{R}^d}L(\theta))$.

\end{lemma}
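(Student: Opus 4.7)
The plan is to derive the lemma from a biased-gradient descent analysis that treats the DP meta-learning update as inexact gradient descent on the objective $L$ defined in (\ref{eq:mini}). First I would invoke Lemma~\ref{lem:1} to express the iterate-dependent smoothness parameter as $\widehat{\beta}(\theta)=4\lambda+2\tau\alpha\,\mathbb{E}_{k}\|\nabla l_k(\theta)\|$, then apply Lemma~\ref{lem:3}, inequality (\ref{eq:m8}), to bound $\mathbb{E}_k\|\nabla l_k(\theta_t)\|$ by $O_1\|\nabla L(\theta_t)\|+O_2\varphi$. With the prescribed stepsize $\beta_t=\widehat{\beta}(\theta_t)/18$, the one-step descent inequality for $L$ reduces to $\mathbb{E}[L(\theta_{t+1})-L(\theta_t)\mid\mathcal{F}_t]\le -c_1\beta_t\|\nabla L(\theta_t)\|^2 + c_2\beta_t\,\mathrm{noise}_t$, where the factor $18$ is chosen precisely so that the quadratic term in the smoothness expansion absorbs at most half of the linear progress.

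Second, to populate $\mathrm{noise}_t$ explicitly I would apply Lemma~\ref{lem:2}: inequality (\ref{eq:55}) supplies the bias of the meta-gradient estimator, of order $\alpha\lambda\widehat{\varphi}/\sqrt{\mathcal{D}_s}$, while (\ref{eq:56}) supplies its second-moment bound with the free parameter $\phi$ that will be tuned to produce the constant $14$. Combining these with Lemma~\ref{lem:3} inequality (\ref{eq:m9}), the surviving variance term takes precisely the form $\varphi^2(1/T_k+20\alpha^2\lambda^2)+\widehat{\varphi}^2/(T_k\mathcal{D}_q)+\widehat{\varphi}^2/\mathcal{D}_s$. The second branch inside the maximum in (\ref{eq:lem2}) arises from carrying the Hessian-Lipschitz correction $(\tau\alpha/\lambda)(\cdot)$ through Lemma~\ref{lem:1} separately from the first-order smoothness term; isolating these two distinct contributions is what produces the two non-trivial arguments of $\max\{\cdot,\cdot,\mu\}$. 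The additional per-coordinate Gaussian noise $\mathcal{N}(0,\alpha^2\mathcal{C}^2 I)$ injected for task-level DP is absorbed into the same second-moment bound by treating it as an extra zero-mean perturbation of the stochastic gradient, which inflates $\widehat{\varphi}$ by a factor governed by the adaptive clip $\mathcal{C}$.

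Third, I would telescope the one-step inequality from $t=0$ to $T-1$, use $L(\theta)\ge\min L$ together with $\Delta=L(\theta_0)-\min_\theta L(\theta)$, and divide by $T$ to obtain $\frac{1}{T}\sum_{t=0}^{T-1}\mathbb{E}\|\nabla L(\theta_t)\|^2 \le \frac{c_3\,\widehat{\beta}_{\max}\Delta}{T}+c_4\,\mathrm{noise}$, where $\widehat{\beta}_{\max}$ is evaluated at the worst-case bound $\mathbb{E}_k\|\nabla l_k\|\le O_1\mu+O_2\varphi$ guaranteed by the definition of $\theta_\mu$ and Lemma~\ref{lem:3}. Defining $\theta_\mu$ as the iterate with smallest $\mathbb{E}\|\nabla L(\theta_t)\|$, the statement (\ref{eq:lem2}) then reads that $\mathbb{E}\|\nabla L(\theta_\mu)\|$ is at least the target accuracy $\mu$ or the irreducible noise floors, because the algorithm cannot drive the expected meta-gradient below the stochastic and curvature-induced residuals regardless of $T$. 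The iteration count (\ref{eq:lem3}) is obtained by equating $\widehat{\beta}_{\max}\Delta/T$ with $\mu^2$ in the regime $\mu\gtrsim\mathrm{noise}$, and with the variance floor in the complementary regime; the two branches of the $\min\{\cdot,\cdot\}$ in (\ref{eq:lem3}) correspond to these two regimes.

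The hard part will be the self-referential smoothness constant $\widehat{\beta}(\theta_t)$, which depends on $\|\nabla L(\theta_t)\|$ itself through Lemma~\ref{lem:3}: the descent inequality is therefore not affine in $\|\nabla L(\theta_t)\|^2$ and must be resolved by a dichotomy in which either $\|\nabla L(\theta_t)\|$ is small, so that $\widehat{\beta}(\theta_t)\le 4\lambda+2\tau\alpha O_2\varphi$ and the analysis reduces to the standard smooth case, or $\|\nabla L(\theta_t)\|$ is large, so that the variance floor from Lemma~\ref{lem:2} is already dominated by $\|\nabla L(\theta_t)\|^2$ and strict descent is guaranteed. Collapsing this dichotomy into the single $\min$ in (\ref{eq:lem3}) and the single $\max$ in (\ref{eq:lem2}) — while simultaneously threading the DP noise, the inner-loop stochasticity $\mathcal{D}_s$, the outer-loop stochasticity $\mathcal{D}_q$, and the task-sampling variance $\varphi^2$ through the recursion without inflating any constant beyond the stated ones — is the technical core of the proof.
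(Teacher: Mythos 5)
Your proposal follows essentially the same route as the paper's proof: the update is treated as biased inexact gradient descent on $L$, with Lemma~\ref{lem:1} supplying the iterate-dependent smoothness constant, Lemma~\ref{lem:2} supplying the bias (of order $\alpha\lambda\widehat{\varphi}/\sqrt{\mathcal{D}_s}$) and second moment of the meta-gradient estimator, Lemma~\ref{lem:3} converting $\mathbb{E}_k\|\nabla l_k\|$ into $\|\nabla L\|$, and a telescoped descent inequality yielding the noise floor in (\ref{eq:lem2}) and the iteration count in (\ref{eq:lem3}). Your dichotomy for the self-referential smoothness constant is realized in the paper as a Cauchy--Schwarz argument producing the $\min$ over the two denominator regimes, which is the same idea in a slightly different packaging, so no substantive gap remains.
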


\begin{proof}
See Appendix~\ref{sec:app4}
\end{proof}

Lemma~\ref{lem:4}'s result shows that we can locate a point $\theta^*$ after \ref{eq:lem3} iterations of DP-Meta learning. At this point, $\mathbb{E}[\|\nabla L(\theta^*)\|]$, the anticipated norm of its gradient, is bounded as shown in \ref{eq:lem2}. This important result suggests that by selecting appropriate hyperparameters that are proportionate to the inverse of $\mu$, a $\mu$-FOSP for problem \ref{eq:mini} can be obtained in a polynomial number of iterations. This also provides insights into the behavior of DP-Meta learning algorithms based on the size of the gradient norm. Specifically, it shows the impact of the gradient norm's magnitude, considering both large and small values. The expectation of the gradient norm is characterized in terms of various factors, including the regularization parameter, step size, and other constants. The analysis suggests that the convergence rate depends on the interplay of these parameters, with a careful balance needed for efficient convergence. Both upper and lower bounds are provided, revealing a trade-off between different terms in the convergence expression. This meticulous examination highlights the intricate relationship between the gradient norm's size and the convergence behavior of DP-Meta learning algorithms.

\section{Experimental Evaluation}

\begin{table*}[]
\caption{Training accuracies for MAML algorithm against two benchmarking datasets for both 1-shot and 5-shot training configurations. Here $\delta=10^{-5}$.}
\label{tab:maml}
\begin{tabular}{|c|clllll|clllll|}
\hline
                                                                                    & \multicolumn{6}{c|}{\cellcolor[HTML]{FFFFFF}{\color[HTML]{000000} OMNIGLOT}}                                                                                                                                                                                                                                                            & \multicolumn{6}{c|}{\cellcolor[HTML]{FFFFFF}{\color[HTML]{000000} MINI Imagenet}}                                                                                                                                                                                                                                                       \\ \cline{2-13} 
\multirow{-2}{*}{\begin{tabular}[c]{@{}c@{}}Optimization \\ Algorithm\end{tabular}} & \multicolumn{3}{c|}{1 Shot Accuracy}                                                                                                                               & \multicolumn{3}{c|}{5 Shot Accuracy}                                                                                                                               & \multicolumn{3}{c|}{1 Shot Accuracy}                                                                                                                               & \multicolumn{3}{c|}{5 Shot Accuracy}                                                                                                                               \\ \hline
\begin{tabular}[c]{@{}c@{}}Non Private \\ MAML\end{tabular}                         & \multicolumn{3}{c|}{98.7\%}                                                                                                                                        & \multicolumn{3}{c|}{99.9\%}                                                                                                                                        & \multicolumn{3}{c|}{48.7\%}                                                                                                                                        & \multicolumn{3}{c|}{63.11\%}                                                                                                                                       \\ \hline
                                                                                    & \multicolumn{3}{c|}{\cellcolor[HTML]{C0C0C0}\begin{tabular}[c]{@{}c@{}}DP property \\ $\varepsilon$\end{tabular}}                                                          & \multicolumn{3}{c|}{\cellcolor[HTML]{C0C0C0}\begin{tabular}[c]{@{}c@{}}DP property \\ $\varepsilon$\end{tabular}}                                                          & \multicolumn{3}{c|}{\cellcolor[HTML]{C0C0C0}\begin{tabular}[c]{@{}c@{}}DP property \\ $\varepsilon$\end{tabular}}                                                          & \multicolumn{3}{c|}{\cellcolor[HTML]{C0C0C0}\begin{tabular}[c]{@{}c@{}}DP property \\ $\varepsilon$\end{tabular}}                                                          \\ \cline{2-13} 
\multirow{-2}{*}{}                                                                  & \multicolumn{1}{c|}{\cellcolor[HTML]{C0C0C0}9.02} & \multicolumn{1}{c|}{\cellcolor[HTML]{C0C0C0}10.6} & \multicolumn{1}{c|}{\cellcolor[HTML]{C0C0C0}12.8} & \multicolumn{1}{c|}{\cellcolor[HTML]{C0C0C0}8.21} & \multicolumn{1}{c|}{\cellcolor[HTML]{C0C0C0}10.8} & \multicolumn{1}{c|}{\cellcolor[HTML]{C0C0C0}15.4} & \multicolumn{1}{c|}{\cellcolor[HTML]{C0C0C0}9.02} & \multicolumn{1}{c|}{\cellcolor[HTML]{C0C0C0}10.6} & \multicolumn{1}{c|}{\cellcolor[HTML]{C0C0C0}12.8} & \multicolumn{1}{c|}{\cellcolor[HTML]{C0C0C0}8.21} & \multicolumn{1}{c|}{\cellcolor[HTML]{C0C0C0}10.8} & \multicolumn{1}{c|}{\cellcolor[HTML]{C0C0C0}15.4} \\ \hline
\begin{tabular}[c]{@{}c@{}}Vanilla \\ DP-MAML\end{tabular}                          & \multicolumn{1}{l|}{76.04\%}                        & \multicolumn{1}{l|}{80.66\%}                         & \multicolumn{1}{l|}{81.40\%}                          & \multicolumn{1}{l|}{77.88\%}                        & \multicolumn{1}{l|}{81.80\%}                         & 82.60\%                                               & \multicolumn{1}{l|}{37.82\%}                        & \multicolumn{1}{l|}{39.60\%}                         & \multicolumn{1}{l|}{40.54\%}                          & \multicolumn{1}{l|}{49.21\%}                        & \multicolumn{1}{l|}{51.26\%}                         & 52.34\%                                               \\ \hline
\begin{tabular}[c]{@{}c@{}}DP-MAML\\ w/AdaClip\end{tabular}                       & \multicolumn{1}{l|}{79.07\%}                        & \multicolumn{1}{l|}{83.70\%}                         & \multicolumn{1}{l|}{84.41\%}                          & \multicolumn{1}{l|}{81.01\%}                        & \multicolumn{1}{l|}{84.87\%}                         & 85.67\%                                               & \multicolumn{1}{l|}{39.31\%}                        & \multicolumn{1}{l|}{41.08\%}                         & \multicolumn{1}{l|}{42.03\%}                          & \multicolumn{1}{l|}{51.17\%}                        & \multicolumn{1}{l|}{53.19\%}                         & 54.30\%                                               \\ \hline
\begin{tabular}[c]{@{}c@{}}DP-MAML\\ w/ Dynamic DP\end{tabular}                     & \multicolumn{1}{l|}{82.33\%}                        & \multicolumn{1}{l|}{85.60\%}                         & \multicolumn{1}{l|}{86.47\%}                          & \multicolumn{1}{l|}{84.58\%}                        & \multicolumn{1}{l|}{86.87\%}                         & 87.74\%                                               & \multicolumn{1}{l|}{40.97\%}                        & \multicolumn{1}{l|}{42.03\%}                         & \multicolumn{1}{l|}{43.06\%}                          & \multicolumn{1}{l|}{53.48\%}                        & \multicolumn{1}{l|}{54.40\%}                         & 55.62\%                                               \\ \hline
\begin{tabular}[c]{@{}c@{}}DP-MAML\\ w/ Meta-Clip\end{tabular}                      & \multicolumn{1}{l|}{87.00\%}                        & \multicolumn{1}{l|}{88.46\%}                         & \multicolumn{1}{l|}{89.08\%}                          & \multicolumn{1}{l|}{88.95\%}                        & \multicolumn{1}{l|}{89.63\%}                         & 90.42\%                                               & \multicolumn{1}{l|}{43.24\%}                        & \multicolumn{1}{l|}{43.37\%}                         & \multicolumn{1}{l|}{44.36\%}                          & \multicolumn{1}{l|}{56.24\%}                        & \multicolumn{1}{l|}{56.07\%}                         & 57.26\%                                               \\ \hline
\end{tabular}
\end{table*}

\begin{table*}[]
\caption{Training accuracies for Meta-SGD algorithm against two benchmarking datasets for both 1-shot and 5-shot training configurations. Here $\delta=10^{-5}$.}
\label{tab:meta-sgd}
\begin{tabular}{|c|clllll|clllll|}
\hline
                                                                                    & \multicolumn{6}{c|}{\cellcolor[HTML]{FFFFFF}{\color[HTML]{000000} OMNIGLOT}}                                                                                                                                                                                                                                                            & \multicolumn{6}{c|}{\cellcolor[HTML]{FFFFFF}{\color[HTML]{000000} MINI Imagenet}}                                                                                                                                                                                                                                                       \\ \cline{2-13} 
\multirow{-2}{*}{\begin{tabular}[c]{@{}c@{}}Optimization \\ Algorithm\end{tabular}} & \multicolumn{3}{c|}{1 Shot Accuracy}                                                                                                                               & \multicolumn{3}{c|}{5 Shot Accuracy}                                                                                                                               & \multicolumn{3}{c|}{1 Shot Accuracy}                                                                                                                               & \multicolumn{3}{c|}{5 Shot Accuracy}                                                                                                                               \\ \hline
\begin{tabular}[c]{@{}c@{}}Non Private \\ Meta-SGD\end{tabular}                     & \multicolumn{3}{c|}{99.53\%}                                                                                                                                       & \multicolumn{3}{c|}{99.93\%}                                                                                                                                       & \multicolumn{3}{c|}{50.47\%}                                                                                                                                       & \multicolumn{3}{c|}{64.03\%}                                                                                                                                       \\ \hline
                                                                                    & \multicolumn{3}{c|}{\cellcolor[HTML]{C0C0C0}\begin{tabular}[c]{@{}c@{}}DP property \\ $\varepsilon$\end{tabular}}                                                          & \multicolumn{3}{c|}{\cellcolor[HTML]{C0C0C0}\begin{tabular}[c]{@{}c@{}}DP property \\ $\varepsilon$\end{tabular}}                                                          & \multicolumn{3}{c|}{\cellcolor[HTML]{C0C0C0}\begin{tabular}[c]{@{}c@{}}DP property \\ $\varepsilon$\end{tabular}}                                                          & \multicolumn{3}{c|}{\cellcolor[HTML]{C0C0C0}\begin{tabular}[c]{@{}c@{}}DP property \\ $\varepsilon$\end{tabular}}                                                          \\ \cline{2-13} 
\multirow{-2}{*}{}                                                                  & \multicolumn{1}{c|}{\cellcolor[HTML]{C0C0C0}9.02} & \multicolumn{1}{c|}{\cellcolor[HTML]{C0C0C0}10.6} & \multicolumn{1}{c|}{\cellcolor[HTML]{C0C0C0}12.8} & \multicolumn{1}{c|}{\cellcolor[HTML]{C0C0C0}8.21} & \multicolumn{1}{c|}{\cellcolor[HTML]{C0C0C0}10.8} & \multicolumn{1}{c|}{\cellcolor[HTML]{C0C0C0}15.4} & \multicolumn{1}{c|}{\cellcolor[HTML]{C0C0C0}9.02} & \multicolumn{1}{c|}{\cellcolor[HTML]{C0C0C0}10.6} & \multicolumn{1}{c|}{\cellcolor[HTML]{C0C0C0}12.8} & \multicolumn{1}{c|}{\cellcolor[HTML]{C0C0C0}8.21} & \multicolumn{1}{c|}{\cellcolor[HTML]{C0C0C0}10.8} & \multicolumn{1}{c|}{\cellcolor[HTML]{C0C0C0}15.4} \\ \hline
\begin{tabular}[c]{@{}c@{}}Vanilla \\ DP-Meta-SGD\end{tabular}                      & \multicolumn{1}{l|}{77.44\%}                        & \multicolumn{1}{l|}{81.42\%}                         & \multicolumn{1}{l|}{82.67\%}                          & \multicolumn{1}{l|}{77.53\%}                        & \multicolumn{1}{l|}{81.41\%}                         & 83.17\%                                               & \multicolumn{1}{l|}{39.08\%}                        & \multicolumn{1}{l|}{41.23\%}                         & \multicolumn{1}{l|}{41.84\%}                          & \multicolumn{1}{l|}{49.31\%}                        & \multicolumn{1}{l|}{52.23\%}                         & 53.32\%                                               \\ \hline
\begin{tabular}[c]{@{}c@{}}DP-Meta-SGD\\ w/ AdaClip\end{tabular}                   & \multicolumn{1}{l|}{80.53\%}                        & \multicolumn{1}{l|}{84.45\%}                         & \multicolumn{1}{l|}{85.72\%}                          & \multicolumn{1}{l|}{80.58\%}                        & \multicolumn{1}{l|}{84.46\%}                         & 86.26\%                                               & \multicolumn{1}{l|}{40.62\%}                        & \multicolumn{1}{l|}{42.79\%}                         & \multicolumn{1}{l|}{43.38\%}                          & \multicolumn{1}{l|}{51.31\%}                        & \multicolumn{1}{l|}{54.21\%}                         & 55.30\%                                               \\ \hline
\begin{tabular}[c]{@{}c@{}}DP-Meta-SGD\\ w/ Dynamic DP\end{tabular}                 & \multicolumn{1}{l|}{84.22\%}                        & \multicolumn{1}{l|}{86.48\%}                         & \multicolumn{1}{l|}{87.83\%}                          & \multicolumn{1}{l|}{84.24\%}                        & \multicolumn{1}{l|}{86.44\%}                         & 88.37\%                                               & \multicolumn{1}{l|}{42.35\%}                        & \multicolumn{1}{l|}{43.76\%}                         & \multicolumn{1}{l|}{44.44\%}                          & \multicolumn{1}{l|}{53.51\%}                        & \multicolumn{1}{l|}{55.43\%}                         & 56.63\%                                               \\ \hline
\begin{tabular}[c]{@{}c@{}}DP-Meta-SGD\\ w/ Meta-Clip\end{tabular}                  & \multicolumn{1}{l|}{88.43\%}                        & \multicolumn{1}{l|}{89.13\%}                         & \multicolumn{1}{l|}{90.41\%}                          & \multicolumn{1}{l|}{88.67\%}                        & \multicolumn{1}{l|}{89.18\%}                         & 90.96\%                                               & \multicolumn{1}{l|}{44.65\%}                        & \multicolumn{1}{l|}{45.17\%}                         & \multicolumn{1}{l|}{45.77\%}                          & \multicolumn{1}{l|}{56.37\%}                        & \multicolumn{1}{l|}{57.19\%}                         & 58.34\%                                               \\ \hline
\end{tabular}
\end{table*}

\begin{table*}[]
\caption{Training accuracies for Reptile algorithm against two benchmarking datasets for both 1-shot and 5-shot training configurations. Here $\delta=10^{-5}$.}
\label{tab:reptile}
\begin{tabular}{|c|clllll|clllll|}
\hline
                                                                                    & \multicolumn{6}{c|}{\cellcolor[HTML]{FFFFFF}{\color[HTML]{000000} OMNIGLOT}}                                                                                                                                                                                                                                                            & \multicolumn{6}{c|}{\cellcolor[HTML]{FFFFFF}{\color[HTML]{000000} MINI Imagenet}}                                                                                                                                                                                                                                                       \\ \cline{2-13} 
\multirow{-2}{*}{\begin{tabular}[c]{@{}c@{}}Optimization \\ Algorithm\end{tabular}} & \multicolumn{3}{c|}{1 Shot Accuracy}                                                                                                                               & \multicolumn{3}{c|}{5 Shot Accuracy}                                                                                                                               & \multicolumn{3}{c|}{1 Shot Accuracy}                                                                                                                               & \multicolumn{3}{c|}{5 Shot Accuracy}                                                                                                                               \\ \hline
\begin{tabular}[c]{@{}c@{}}Non Private \\ Reptile\end{tabular}                      & \multicolumn{3}{c|}{97.68\%}                                                                                                                                       & \multicolumn{3}{c|}{99.48\%}                                                                                                                                       & \multicolumn{3}{c|}{49.97\%}                                                                                                                                       & \multicolumn{3}{c|}{65.99\%}                                                                                                                                       \\ \hline
                                                                                    & \multicolumn{3}{c|}{\cellcolor[HTML]{C0C0C0}\begin{tabular}[c]{@{}c@{}}DP property \\ $\varepsilon$\end{tabular}}                                                          & \multicolumn{3}{c|}{\cellcolor[HTML]{C0C0C0}\begin{tabular}[c]{@{}c@{}}DP property \\ $\varepsilon$\end{tabular}}                                                          & \multicolumn{3}{c|}{\cellcolor[HTML]{C0C0C0}\begin{tabular}[c]{@{}c@{}}DP property \\ $\varepsilon$\end{tabular}}                                                          & \multicolumn{3}{c|}{\cellcolor[HTML]{C0C0C0}\begin{tabular}[c]{@{}c@{}}DP property \\ $\varepsilon$\end{tabular}}                                                          \\ \cline{2-13} 
\multirow{-2}{*}{}                                                                  & \multicolumn{1}{c|}{\cellcolor[HTML]{C0C0C0}9.02} & \multicolumn{1}{c|}{\cellcolor[HTML]{C0C0C0}10.6} & \multicolumn{1}{c|}{\cellcolor[HTML]{C0C0C0}12.8} & \multicolumn{1}{c|}{\cellcolor[HTML]{C0C0C0}8.21} & \multicolumn{1}{c|}{\cellcolor[HTML]{C0C0C0}10.8} & \multicolumn{1}{c|}{\cellcolor[HTML]{C0C0C0}15.4} & \multicolumn{1}{c|}{\cellcolor[HTML]{C0C0C0}9.02} & \multicolumn{1}{c|}{\cellcolor[HTML]{C0C0C0}10.6} & \multicolumn{1}{c|}{\cellcolor[HTML]{C0C0C0}12.8} & \multicolumn{1}{c|}{\cellcolor[HTML]{C0C0C0}8.21} & \multicolumn{1}{c|}{\cellcolor[HTML]{C0C0C0}10.8} & \multicolumn{1}{c|}{\cellcolor[HTML]{C0C0C0}15.4} \\ \hline
\begin{tabular}[c]{@{}c@{}}Vanilla \\ DP-Reptile\end{tabular}                       & \multicolumn{1}{l|}{75.22\%}                        & \multicolumn{1}{l|}{79.63\%}                         & \multicolumn{1}{l|}{81.22\%}                          & \multicolumn{1}{l|}{77.31\%}                        & \multicolumn{1}{l|}{81.09\%}                         & 82.03\%                                               & \multicolumn{1}{l|}{38.78\%}                        & \multicolumn{1}{l|}{40.74\%}                         & \multicolumn{1}{l|}{41.31\%}                          & \multicolumn{1}{l|}{51.32\%}                        & \multicolumn{1}{l|}{53.97\%}                         & 54.58\%                                               \\ \hline
\begin{tabular}[c]{@{}c@{}}DP-Reptile\\ w/ AdaClip\end{tabular}                     & \multicolumn{1}{l|}{78.18\%}                        & \multicolumn{1}{l|}{82.59\%}                         & \multicolumn{1}{l|}{84.21\%}                          & \multicolumn{1}{l|}{80.35\%}                        & \multicolumn{1}{l|}{84.13\%}                         & 85.07\%                                               & \multicolumn{1}{l|}{40.33\%}                        & \multicolumn{1}{l|}{42.25\%}                         & \multicolumn{1}{l|}{42.85\%}                          & \multicolumn{1}{l|}{53.35\%}                        & \multicolumn{1}{l|}{56.01\%}                         & 56.61\%                                               \\ \hline
\begin{tabular}[c]{@{}c@{}}DP-Reptile\\ \textbackslash{}w/ Dynamic DP\end{tabular}  & \multicolumn{1}{l|}{81.79\%}                        & \multicolumn{1}{l|}{84.53\%}                         & \multicolumn{1}{l|}{86.30\%}                          & \multicolumn{1}{l|}{83.95\%}                        & \multicolumn{1}{l|}{86.09\%}                         & 87.13\%                                               & \multicolumn{1}{l|}{42.01\%}                        & \multicolumn{1}{l|}{43.27\%}                         & \multicolumn{1}{l|}{43.88\%}                          & \multicolumn{1}{l|}{55.61\%}                        & \multicolumn{1}{l|}{57.29\%}                         & 57.97\%                                               \\ \hline
\begin{tabular}[c]{@{}c@{}}DP-Reptile\\ w/ Meta-Clip\end{tabular}                   & \multicolumn{1}{l|}{85.94\%}                        & \multicolumn{1}{l|}{87.32\%}                         & \multicolumn{1}{l|}{88.89\%}                          & \multicolumn{1}{l|}{88.30\%}                        & \multicolumn{1}{l|}{88.75\%}                         & 89.74\%                                               & \multicolumn{1}{l|}{44.27\%}                        & \multicolumn{1}{l|}{44.65\%}                         & \multicolumn{1}{l|}{45.18\%}                          & \multicolumn{1}{l|}{58.6\%}                        & \multicolumn{1}{l|}{59.15\%}                         & 59.74\%                                               \\ \hline
\end{tabular}
\vspace{-2em}
\end{table*}
In this section, we present a comprehensive experimental evaluation of the DP-Meta learning algorithm with Meta-Clip. Our experiments aim to assess the algorithm's performance in terms of privacy protection, utility, and convergence across various scenarios. The experimental design is tailored to validate the effectiveness and practicality of our proposed approach.

\subsection{Datasets}

We evaluate the DP-Meta learning algorithm with Meta-Clip on two widely used benchmarking datasets in few-shot learning: Omniglot~\cite{lake2011one} and Mini ImageNet~\cite{ravi2016optimization}. Omniglot comprises 1623 characters from 50 different alphabets, each represented by 20 grayscale images, resulting in a total of 32,460 data points, providing a diverse range of input data. Mini ImageNet, a subset of ImageNet curated for meta-learning tasks, encompasses 100 object classes with 600 color images per class, totaling 60,000 images.

\subsection{Baseline Models}

To comprehensively evaluate the DP-Meta learning algorithm with Meta-Clip, we compare its performance against several baseline models, including non-private meta-learning algorithms and existing DP meta-learning methods.

1. \textbf{Non-Private Meta-Learning Algorithms:} We consider traditional meta-learning approaches without privacy mechanisms as baseline models. This includes Meta-learning with MAML (Model-Agnostic Meta-Learning)\cite{finn2017model}, Reptile \cite{nichol2018reptile}, and Meta-SGD\cite{li2017meta}, allowing us to gauge the impact of privacy-preserving mechanisms on meta-learning utility.

2. \textbf{Vanilla Task-Level DP-Meta Learning:} We compare against \cite{li2019differentially}'s method for task-level DP in meta-learning, which introduces privacy at the individual task level. 

3. \textbf{Adaptive Clipping (AdaClip):} We include AdaClip \cite{pichapati2019adaclip} as a baseline, which adapts the clipping parameter during the training process to achieve DP. Comparing against AdaClip helps evaluate the efficacy of Meta-Clip in providing adaptive and task-specific privacy protection.

4. \textbf{Dynamic DP Mechanisms:} We compare against dynamic DP mechanism \cite{du2021dynamic} to assess the adaptability of the Meta-Clip approach. This includes methods that dynamically adjust privacy parameters based on the evolving characteristics of the training process.

These baseline models collectively enable a comprehensive examination of the DP-Meta learning algorithm with Meta-Clip, offering insights into its performance in comparison to both traditional meta-learning and existing DP mechanisms.

\subsection{Experimental Setup}

In our experimental setup, we conduct a comprehensive evaluation by running experiments employing 5-way classification tasks. For each dataset and task configuration, we consider 1-shot and 5-shot learning scenarios, allowing us to assess the algorithm's performance across various meta-learning settings.

All experiments are implemented in Python using the PyTorch framework, leveraging the power of an Intel Core i7 11th generation processor and NVIDIA RTX 3080 GPU acceleration. The Opacus framework, incorporating the GDP privacy accountant, is utilized to meticulously monitor cumulative privacy loss throughout the training process. The privacy-utility trade-offs are systematically analyzed, considering different task complexities and shot scenarios.

For each experimental configuration, we carefully set privacy parameters to ensure a thorough examination of the DP-Meta learning algorithm with Meta-Clip. Specifically, we consider multiple privacy budgets (\(\varepsilon\)) to capture a range of privacy-utility trade-offs. For the 5-way- 1 shot learning, privacy budgets of \(\varepsilon = 9.02, 10.6 \text{ and } 12.8\) are investigated, while for 5-way 5-shot training, we explore \(\varepsilon = 8.21, 10.8, \text{ and } 15.4\). In all experiments, a fixed \(\delta\) value of \(10^{-5}\) is maintained to establish a consistent level of privacy assurance. This systematic variation in privacy budgets allows us to assess the algorithm's performance under different privacy constraints and understand the impact of privacy strength on meta-learning outcomes.

For Omniglot, we adopt a neural network design illustrated in Fig~\ref{omni_cnn}, while for Mini ImageNet, a convolutional neural network (CNN) architecture depicted in Fig~\ref{miniim_cnn} is employed. The choice of these architectures ensures compatibility with the meta-learning tasks and establishes a baseline for assessing the impact of the DP-Meta learning algorithm with Meta-Clip on model performance.

\subsection{Results and Analysis}

The results of DP-MAML, DP-Meta-SGD, and DP-Reptile, along with those of other state-of-the-art models, are summarized in Tables \ref{tab:maml}, \ref{tab:meta-sgd}, and \ref{tab:reptile}, respectively, with all models undergoing 60000 iterations. The reported results denote mean test accuracies across all tasks. In the case of MAML, our approach outperforms the best state-of-the-art model across all classification tasks in the Omniglot dataset training. Specifically, for 5-way 1-shot training, we observe a 4.7\% increase in accuracy for large noise settings $(\varepsilon = 9.02)$, a 2.85\% increase for medium noise settings $(\varepsilon = 10.6)$, and a 2.61\% increase for small noise settings $(\varepsilon = 12.8)$. Similarly, for 5-way 5-shot training, we note a 4.37\%, 2.76\%, and 2.68\% accuracy increase for large $(\varepsilon = 8.21)$, medium $(\varepsilon = 10.8)$, and small $(\varepsilon = 15.4)$ noise settings, respectively. In the Mini ImageNet 5-way 1-shot configuration, there's a 2.27\%, 1.34\%, and 1.3\% accuracy increase for large $(\varepsilon = 9.02)$, medium $(\varepsilon = 10.6)$, and small noise settings $(\varepsilon = 12.8)$. For 5-way 5-shot training, we observe a 2.76\%, 1.67\%, and 1.64\% increase in accuracy for large $(\varepsilon = 8.21)$, medium $(\varepsilon = 10.8)$, and small $(\varepsilon = 15.4)$ noise settings.

Similar trends are evident in the performance of the DP-Meta-SGD algorithm, as detailed in Table \ref{tab:meta-sgd}. For 5-way 1-shot training on the Omniglot dataset, we observe a 4.21\% increase in accuracy for large noise settings $(\varepsilon = 9.02)$, a 2.65\% increase for medium noise settings $(\varepsilon = 10.6)$, and a 2.59\% increase for small $(\varepsilon = 12.8)$ noise settings. Likewise, for 5-way 5-shot training, there's a 4.43\%, 2.74\%, and 2.59\% accuracy increase for large $(\varepsilon = 8.21)$, medium $(\varepsilon = 10.8)$, and small $(\varepsilon = 15.4)$ noise settings. In the Mini ImageNet 5-way 1-shot configuration, we witness a 2.3\%, 1.41\%, and 1.33\% increase in accuracy for large $(\varepsilon = 9.02)$, medium $(\varepsilon = 10.6)$, and small $(\varepsilon = 12.8)$ noise settings. For 5-way 5-shot training, accuracy increases by 2.86\%, 1.76\%, and 1.71\% for large $(\varepsilon = 8.21)$, medium $(\varepsilon = 10.8)$, and small $(\varepsilon = 15.4)$ noise settings, respectively.

Table \ref{tab:reptile} showcases results for the DP-Reptile algorithm. In Omniglot dataset training, for 5-way 1-shot training, we document a 4.15\% increase in accuracy for large noise settings $(\varepsilon = 9.02)$, a 2.79\% increase for medium noise settings $(\varepsilon = 10.6)$, and a 2.59\% increase for small $(\varepsilon = 12.8)$ noise settings. Similarly, for 5-way 5-shot training, we report a 4.35\%, 2.66\%, and 2.61\% accuracy increase for large $(\varepsilon = 8.21)$, medium $(\varepsilon = 10.8)$, and small $(\varepsilon = 15.4)$ noise settings. In the Mini ImageNet 5-way 1-shot configuration, there's a 2.26\%, 1.38\%, and 1.3\% increase in accuracy for large $(\varepsilon = 9.02)$, medium $(\varepsilon = 10.6)$, and small $(\varepsilon = 12.8)$ noise settings. For 5-way 5-shot training, accuracy increases by 2.99\%, 1.86\%, and 1.77\% for large $(\varepsilon = 8.21)$, medium $(\varepsilon = 10.8)$, and small $(\varepsilon = 15.4)$ noise settings, respectively.

In comparison, our proposed DP-Meta-SGD with Meta-Clip outperforms alternative privacy mechanisms across different privacy budgets. The achieved accuracies underscore the efficacy of Meta-Clip in preserving utility while ensuring robust privacy protection. The experimental results collectively contribute to the comprehensive evaluation of our proposed approach, providing insights into its strengths and advantages in DP meta-learning settings.

\section{Conclusion}

In this paper we have substantially contributed to the field of DP meta-learning, with a focus on enhancing privacy-preserving few-shot learning scenarios. The introduction of an adaptive clipping technique, dynamically adjusting the clipping norm during the training of DP meta-learning algorithms, serves as a pivotal innovation. This method adeptly balances privacy preservation and utility, allowing meta-models to leverage available information while adhering to privacy constraints.

We seamlessly integrated our adaptive clipping technique into the most widely adopted meta-learning algorithms: MAML, Reptile, and Meta-SGD. Through rigorous experiments on benchmark datasets, including Mini ImageNet and Omniglot, we demonstrated significant performance improvements, particularly in challenging 1-shot 5-way and 5-shot 5-way classification tasks. Of particular note, is our pioneering application of DP to Meta-SGD, contributing to the development of privacy-preserving techniques in meta-learning and, hence, showcasing its feasibility and efficacy in the context of few-shot learning.

Our experimental results underscore the potential of Meta-Clip to advance DP meta-learning, contributing a robust and effective solution to real-world scenarios where privacy is paramount. As privacy is an increasingly crucial consideration in machine learning applications, the insights gained from this work provides a foundation for future research and development of privacy-preserving meta-learning algorithms.

\bibliographystyle{IEEEtran}
\bibliography{main}

\appendix

\subsection{Proof of lemma~\ref{lem:1} }
\label{sec:app1}

    \textbf{Lemma: }\textit{Let \(L\) be the objective function defined in \ref{eq:mini}, assuming \(\alpha \in [0, \frac{1}{\lambda}]\). For any \(\theta, \theta' \in \mathbb{R}_d\), the following inequality holds:}

    \begin{equation}
        \|\nabla L(\theta) - \nabla L(\theta')\| \leq \min[\lambda(\theta),\lambda(\theta')]\|\theta-\theta'\|,
    \end{equation}

    \textit{where} \(\lambda(\theta) = 4\lambda + 2\tau\alpha \mathbb{E}_{k \in T_k} \|\nabla l_k(\theta)\|\).

\begin{proof}
    
Considering the definition in \ref{eq:mini}, \(\nabla L(\theta) = \mathbb{E}_{k \in T_k}[\nabla L_k(\theta)]\), where \(\nabla L_k(\theta) = (I-\alpha\nabla^2l_k(\theta))\nabla l_k(\theta - \alpha\nabla l_k(\theta))\), we can show that;

\begin{equation}
\label{eq:19}
\begin{split}
    \|\nabla & L(\theta) - \nabla L(\theta')\| 
    \\& \leq \sum_{k\in K} \tau_k\|\nabla L_k(\theta) - \nabla L_k(\theta')\| \\&
    \leq \sum_{k\in K} \tau_k(\|\nabla l_k(\theta - \alpha\nabla l_k(\theta))-\nabla l_k(\theta' - \alpha\nabla l_k(\theta'))\| \\&    +\alpha\|\nabla^2l_k(\theta)\nabla l_k(\theta - \alpha\nabla l_k(\theta))\\&    -\nabla^2l_k(\theta')\nabla l_k(\theta' - \alpha\nabla l_k(\theta'))\|).
\end{split}
\end{equation}

To establish the intended outcome, it is enough to limit both expressions in \ref{eq:19}. Concerning the initial term, we encounter;

\begin{equation}
\label{eq:20}
\begin{split}
    \|\nabla l_k(\theta - &\alpha\nabla l_k(\theta))-\nabla l_k(\theta' - \alpha\nabla l_k(\theta'))\| \\&\leq  L \|\theta - \theta' + \alpha(\nabla l_k(\theta)-\nabla l_k(\theta'))\|  \\&\leq \lambda(1 + \alpha \lambda)\|\theta-\theta'\|,
\end{split}
\end{equation}

To bound \ref{eq:20}, 

\begin{equation}
\label{eq:21}
\begin{split}
    \|&\nabla^2l_k(\theta)\nabla l_k(\theta - \alpha\nabla l_k(\theta))-\nabla^2l_k(\theta')\nabla l_k(\theta' - \alpha\nabla l_k(\theta'))\| \\& = \|\nabla^2l_k(\theta)\nabla l_k(\theta - \alpha\nabla l_k(\theta))\\&-\nabla^2l_k(\theta)\nabla l_k(\theta' - \alpha\nabla l_k(\theta'))\| 
    \\&+\nabla^2l_k(\theta)\nabla l_k(\theta' - \alpha\nabla l_k(\theta'))\\&-\nabla^2l_k(\theta')\nabla l_k(\theta' -\alpha\nabla l_k(\theta')) 
    \\&\leq \|\nabla^2l_k(\theta)\|\|\nabla l_k(\theta - \alpha\nabla l_k(\theta))-\nabla l_k(\theta' - \alpha\nabla l_k(\theta'))\| \\&+ \|\nabla^2l_k(\theta)-\nabla^2l_k(\theta')\|\|\nabla l_k(\theta' - \alpha\nabla l_k(\theta'))\| \\& 
    \leq (\gamma^2(1+\alpha\gamma) + \tau\|\nabla l_k(\theta' - \alpha\nabla l_k(\theta'))\|)\|\theta-\theta'\|,
\end{split}
\end{equation}

The gradient component in \ref{eq:21} is constrained by the mean value theorem, which implies that,

\begin{equation}
    \nabla l_k(\theta' - \alpha\nabla l_k(\theta'))=(I - \alpha\nabla^2 l_k(\widehat{\theta}'))\nabla l_k(\theta),
    \label{eq:22}
\end{equation}

Applies to a certain \(\widehat{\theta'}_k\), represented as a convex combination of \(\theta'\) and \(\theta'- \alpha\nabla l_k(\theta')\). Consequently, and leveraging the smoothness assumption along with the assumption of twice differentiability, we obtain

\begin{equation}
    \|\nabla l_k(\theta' - \alpha\nabla l_k(\theta'))\|=(I + \alpha\lambda)\|\nabla l_k(\theta')\|,
    \label{eq:23}
\end{equation}

Subsequently, substituting \ref{eq:23} into \ref{eq:22} results in

\begin{equation}
\label{eq:24}
\begin{split}
    &\|\nabla^2 l_k(\theta)\nabla l_k(\theta-\alpha\nabla l_k(\theta))-\nabla^2l_k(\theta')\nabla l_k(\theta'-\alpha\nabla l_k(\theta'))\| \\& \leq (\lambda^2+\tau\|\nabla l_k(\theta')\|)(1+\alpha\lambda)\|\theta-\theta'\|.
\end{split}
\end{equation}

Applying constraints from \ref{eq:20} and \ref{eq:24} to \ref{eq:19}, while considering the condition \(\alpha\lambda \leq 1\), gives

\begin{equation}
    \|\nabla L(\theta) - \nabla L(\theta')\| \leq \min[\lambda(\theta),\lambda(\theta')]\|\theta-\theta'\|,
\end{equation}

where, \(\lambda(\theta) = 4\lambda+2\tau\alpha \mathbb{E}_{k \in T_k} \|\nabla l_k(\theta)\|\).

\end{proof}

\subsection{Proof of lemma~\ref{lem:2} }
\label{sec:app2}

\textbf{Lemma: }\textit{For the case where $\alpha \in (0, \dfrac{1}{\lambda})$, let's consider $L$ in \ref{eq:mini}. Then, the following holds:}
\begin{equation}
\begin{split}
    &\mathbb{E}_{\mathcal{D}_{s}\mathcal{D}_q}[\widehat{\nabla}l_k(\theta_t-\alpha\widehat{\nabla}l_k(\theta_t,\mathcal{D}^k_{s}),\mathcal{D}_q^k)| \mathcal{F}_t] \\&= \nabla l_k(\theta_t-\alpha\nabla l_k(\theta_t)) + e_{k,t},
\end{split}
\end{equation}

\textit{where,} $\|e_{k,t}\| \leq \dfrac{\alpha\lambda\widehat{\alpha}}{\sqrt{\mathcal{D}_{s}}}$.

\textit{Furthermore, for any arbitrary $\phi > 0$, we get}

\begin{equation}
\begin{aligned}
    &\mathbb{E}_{\mathcal{D}_{s}\mathcal{D}_q}[\|\widehat{\nabla}l_k(\theta_t-\alpha\widehat{\nabla}l_k(\theta_t,\mathcal{D}^k_{s}),\mathcal{D}_q^k)\|^2| \mathcal{F}_t] \\& \leq (1+\dfrac{1}{\phi})\|\nabla l_k(\theta_t-\alpha\nabla l_k(\theta_t))\|^2 + \dfrac{(1+\phi)\alpha^2\lambda^2\widehat{\alpha}^2}{\mathcal{D}_{s}} + \dfrac{\widehat{\alpha}}{\mathcal{D}_q}. 
\end{aligned}
\end{equation}

\begin{proof}
Let $\mathcal{F}_t$ be the information up to iteration t. Then we know,

\begin{equation}
\begin{aligned}
    &\mathbb{E}_{\mathcal{D}_{s}\mathcal{D}_q}[\widehat{\nabla}l_k(\theta_t-\alpha\widehat{\nabla}l_k(\theta_t,\mathcal{D}^k_{s}),\mathcal{D}_q^k)| \mathcal{F}_t] \\& = \mathbb{E}_{\mathcal{D}_{s}}[\nabla l_k(\theta_t-\alpha\widehat{\nabla}l_k(\theta_t,\mathcal{D}^k_{s})| \mathcal{F}_t] 
    \\& = \mathbb{E}[\nabla l_k(\theta_t-\alpha\nabla l_k(\theta_t))|\mathcal{F}_t]+ \mathbb{E}_{\mathcal{D}_{s}}[\nabla l_k(\theta_t-\alpha\widehat{\nabla}l_k(\theta_t,\mathcal{D}^k_{s}) \\& -\nabla l_k(\theta_t-\alpha\nabla l_k(\theta_t))| \mathcal{F}_t] \\&=\mathbb{E}[\nabla l_k(\theta_t-\alpha\nabla l_k(\theta_t))|\mathcal{F}_t] + e_{k,t},
\end{aligned}
\end{equation}

where, 

\begin{equation}
    e_{k,t} = \mathbb{E}_{\mathcal{D}_{s}}[\nabla l_k(\theta_t-\alpha\widehat{\nabla}l_k(\theta_t,\mathcal{D}^k_{s})) -\nabla l_k(\theta_t-\alpha\nabla l_k(\theta_t))| \mathcal{F}_t],
\end{equation}

and its norm is bounded by

\begin{equation}
\begin{aligned}
    \|e_{k,t}\| &\leq \mathbb{E}_{\mathcal{D}_{s}^k}[\|\nabla l_k(\theta_t-\alpha\widehat{\nabla}l_k(\theta_t,\mathcal{D}^k_{s})) \\&-\nabla l_k(\theta_t-\alpha\nabla l_k(\theta_t))\| \rvert \mathcal{F}_t]
    \\&\leq \alpha\lambda \mathbb{E}_{\mathcal{D}_{s}^k}[\|\widehat{\nabla}l_k(\theta_t,\mathcal{D}_{s})-\nabla l_k(\theta_t)\| | \mathcal{F}_t ] \\&\leq \alpha\lambda\dfrac{\widehat{\varphi}}{\sqrt{\mathcal{D}_{s}}}.
\end{aligned}
\end{equation}

To bound the second moment, note that

\begin{equation}
    \begin{aligned}
         &\mathbb{E}_{\mathcal{D}_{s}\mathcal{D}_q}[\|\widehat{\nabla}l_k(\theta_t-\alpha\widehat{\nabla}l_k(\theta_t,\mathcal{D}^k_{s}),\mathcal{D}_q^k)\|^2 | \mathcal{F}_t] \\& = \mathbb{E}_{\mathcal{D}_{s}^k}[\|\nabla l_k(\theta_t-\alpha\widehat{\nabla}l_k(\theta_t,\mathcal{D}^k_{s}))\|^2 + \dfrac{\widehat{\varphi}^2}{\mathcal{D}_q}|\mathcal{F}_t] \\& \leq (1+\dfrac{1}{\phi})\|\nabla l_k(\theta_t-\alpha\nabla l_k(\theta_t))\|^2 \\& +      
         (1+\phi)\mathbb{E}_{\mathcal{D}_{s}^k}[\|\nabla l_k(\theta_t-\alpha\widehat{\nabla}l_k(\theta_t,\mathcal{D}^k_{s})) \\&- \nabla l_k(\theta_t-\alpha\nabla l_k(\theta_t))\|^2) | \mathcal{F}_t] +\dfrac{\widehat{\varphi}^2}{\mathcal{D}_q} \\& \leq (1+\dfrac{1}{\phi})\|\nabla l_k(\theta_t-\alpha\nabla l_k(\theta))\|^2 + (1+\phi)\alpha^2\lambda^2\dfrac{\widehat{\varphi}^2}{\mathcal{D}_{s}}+\dfrac{\widehat{\varphi}^2}{\mathcal{D}_{q}}.
    \end{aligned}
\end{equation}

\end{proof}

\subsection{Proof of lemma~\ref{lem:3} }
\label{sec:app3}

\textbf{Lemma: }\textit{Examine the formulations of $l(\cdot)$ in \ref{eq:m1} and $L(\cdot)$ in \ref{eq:mini} under the condition $\alpha \in [0, \dfrac{\sqrt{2}-1}{\lambda})$. Subsequently, for any $\theta \in \mathcal{R}^d$, we observe}

    \begin{equation}
        \|\nabla l(\theta)\|\leq O_1\|\nabla L(\theta)\|+O_2\varphi,
    \end{equation}
    \begin{equation}
    \begin{aligned}
                \mathbb{E}_{k \in T_k}[\|\nabla L_k(\theta)\|^2]&\leq 2(1+\alpha\lambda)^2O_1^2\|\nabla L(\theta)\|^2\\&+(1+\alpha\lambda)^2(2O_2^2+1)\varphi^2,
    \end{aligned}
    \end{equation}

    \textit{where,}

    $O_1=\dfrac{1}{1-2\alpha\lambda-\alpha^2\lambda^2},  O_2=\dfrac{2\alpha\lambda+\alpha^2\lambda^2}{1-2\alpha\lambda-\alpha^2\lambda^2}$.

\begin{proof}
    Firstly, we start by expressing the gradient of the function $L(\theta)$ as below,

    \begin{equation}
        \nabla L(\theta)=\mathbb{E}_{k \in T_k}[\nabla L_k(\theta)],
    \end{equation}
    \begin{equation}
    \label{eq:80} 
        \nabla L_k(\theta)=A_k(\theta)\nabla l_k(\theta-\alpha\nabla l_k(\theta)),
    \end{equation}

with $A_k(\theta)=(I-\alpha\nabla^2l_k(\theta))$. It's worth noting that, we can represent the gradient $\nabla l_k(\theta-\alpha\nabla l_k(\theta))$ by employing the mean value theorem as

\begin{equation}
\label{eq:81}
\begin{aligned}
    \nabla l_k(\theta-\alpha\nabla l_k(\theta))&=\nabla l_k(\theta)-\alpha\nabla^2l_k(\widehat{\theta}_k)\nabla l_k(\theta) \\& = (I-\alpha\nabla^2l_k(\widehat{\theta}_k))\nabla l_k(\theta).
\end{aligned}
\end{equation}

For a certain $\widehat{\theta_k}$ that can be expressed as a convex combination of $\theta$ and $\theta - \alpha\nabla l_k(\theta)$. By incorporating \ref{eq:80} and the outcome from \ref{eq:81}, we can express

\begin{equation}
\label{eq:84}
    \nabla L_k(\theta)=A_k(\theta)\nabla l_k(\theta-\alpha\nabla l_k(\theta)) = A_k(\theta)A_k(\widehat{\theta}_k)\nabla l_k(\theta),
\end{equation}

where. $A_k(\widehat{\theta}_k)=(I-\alpha\nabla^2l_k(\widehat{\theta}_k))$. Now, we have

\begin{equation}
\label{eq:82}
    \begin{aligned}
        \|\nabla& l(\theta)\|\\&=\|\mathbb{E}_{k \in T_k}\nabla l_k(\theta)\| \\&= \|\mathbb{E}_{k \in T_k}[\nabla L_k(\theta)+(\nabla l_k(\theta)-\nabla F_k(\theta))]\| \\& \leq \|\mathbb{E}_{k \in T_k}\nabla L_k(\theta)\| + \|\mathbb{E}_{k \in T_k}[(I-A_k(\theta)A_k(\widehat{\theta}_k))\nabla l_k(\theta)]\|
        \\& \leq \|\nabla L(\theta)\|+ \mathbb{E}_{k \in T_k}[\|I-A_k(\theta)A_k(\widehat{\theta}_k)\|\|\nabla l_k(\theta)\|].
    \end{aligned}
\end{equation}

 Next, it's worth noting that

\begin{equation}
    \begin{aligned}
    &\|I-A_k(\theta)A_k(\widehat{\theta}_k)\| \\ &= \|\alpha\nabla^2l_k(\theta)+\alpha\nabla^2l_k(\widehat{\theta}_k)+\alpha^2\nabla^2l_k(\theta)\nabla^2l_k(\widehat{\theta}_k)\| \\& \leq 2\alpha\lambda +\alpha^2\lambda^2.
    \end{aligned}
\end{equation}

The final inequality can be demonstrated by leveraging \ref{eq:90} and the triangle inequality. Applying this bound in \ref{eq:82} results in

\begin{equation}
\label{eq:83}
    \begin{split}
        \|\nabla l(\theta)\| &\leq \|\nabla L(\theta)\|+ (2\alpha\lambda +\alpha^2\lambda^2)\mathbb{E}_{k \in T_k}\|\nabla l_k(\theta)\|
        \\& \leq \|\nabla L(\theta)\|+ (2\alpha\lambda +\alpha^2\lambda^2)(\|\mathbb{E}_{k \in T_k}\nabla l_k(\theta)\|\\&+\mathbb{E}_{k \in T_k}[\|\nabla l_k(\theta)-\mathbb{E}_{k \in T_k}\nabla l_k(\theta)\|])
        \\& \leq \|\nabla L(\theta)\|+ (2\alpha\lambda +\alpha^2\lambda^2)(\|\nabla l(\theta)\|+ \varphi),
    \end{split}
\end{equation}

where, \ref{eq:83} holds since $\mathbb{E}_{k \in T_k}\nabla l_k(\theta) = \nabla l(\theta)$. Additionally, considering our assumption that the variance of the gradient  $\nabla l_k(\theta)$ is bounded

\begin{equation}
\begin{aligned}
    &\mathbb{E}_{k \in T_k}[\|\nabla l_k(\theta)-\mathbb{E}_{k \in T_k}\nabla l_k(\theta)\|] \\& \leq \sqrt{\mathbb{E}_{k \in T_k}[\|\nabla l_k(\theta)-\nabla l(\theta)\|^2]} \leq \varphi
\end{aligned}
\end{equation}

Ultimately, the proof of \ref{eq:m8} is completed by moving the phrase $|\nabla l(\theta)|$ from the right-hand side of \ref{eq:83} to the left-hand side. For the demonstration of \ref{eq:m9}, consider the utilization of \ref{eq:84} and the fact that $A_k(\theta) \leq (1+\alpha\lambda)$ and $A_k(\widehat{\theta}) \leq (1+\alpha\lambda)$. This allows us to express

\begin{equation}
    \begin{aligned}
        \mathbb{E}_{k \in T_k}&[\|\nabla L_k(\theta)\|^2] \\&\leq \mathbb{E}_{k \in T_k}[\|A_k(\theta)\|^2\|A_k(\widehat{\theta})\|^2\|\nabla l_k(\theta)\|^2]
        \\& \leq (1+\alpha\lambda)\mathbb{E}_{k \in T_k}[\|\nabla l_k(\theta)\|^2]
        \\& \leq (1+\alpha\lambda)(\|\nabla l(\theta)\|^2+\varphi^2)
        \\& \leq (1+\alpha\lambda)(2O_1^2\|\nabla L(\theta)\|^2+2O_2^2\varphi^2+\varphi^2),
    \end{aligned}
\end{equation}

where, the last inequality follows from \ref{eq:m8}.
 \end{proof}

 \subsection{Proof of lemma~\ref{lem:4} }
\label{sec:app4}

\textbf{Lemma: }\textit{Consider L in \ref{eq:mini} for the case that $\alpha \in (0, \dfrac{1}{10\lambda}]$. Consider running DP-Meta learning with batch sizes $D_h \leq [2\alpha^2\varphi^2_H]$ and $T_k \leq 20$. Since $\widehat{\beta}(\theta)$, let $\beta_t = \widehat{\beta}(\theta_t)/18$. Next, DP-Meta learning determines a solution $\theta_{\mu}$ for any $\mu > 0$, such that}

\begin{equation}
\begin{aligned}
    \mathbb{E}&[\|\nabla L(\theta_\mu)\|]\\&\geq \max\{\sqrt{14(1+\dfrac{\tau\alpha}{\lambda}\varphi)(\varphi^2(\dfrac{1}{T_k}+20\alpha^2\lambda^2)+\dfrac{\widehat{\varphi}^2}{T_k\mathcal{D}_q}+\dfrac{\widehat{\varphi}^2}{\mathcal{D}_{s}})} \\& ,\dfrac{14\tau\alpha}{\lambda}(\alpha^2(\dfrac{1}{T_k}+20\alpha^2\lambda^2)+\dfrac{\widehat{\varphi}^2}{T_k\mathcal{D}_q}+\dfrac{\widehat{\varphi}^2}{T_k\mathcal{D}_{s}}),\mu \},
\end{aligned}
\end{equation}

\textit{following maximum iterations of}

 \begin{equation}
 \begin{aligned}
     &\mathcal{O}(1)\triangle \min\{\dfrac{\lambda+\tau\alpha(\varphi+\mu)}{\mu^2},\dfrac{\lambda}{\varphi^2(\dfrac{1}{T_k}+20\alpha^2\lambda^2)}\\&+\dfrac{\lambda(T_k\mathcal{D}_q+\mathcal{D}_{s})}{\widehat{\varphi}^2}\}
\end{aligned}
 \end{equation}

\textit{iterations, where $\triangle=(L(\theta_0)-\min_{\theta\in \mathcal{R}^d}L(\theta))$.}

\begin{proof}

For simplification, let's denote $\lambda(\theta_t)$ (defined in Lemma \ref{lem:1}) as $\lambda_t$. It's important to highlight that, given $\mathcal{F}_t$, the iterate $\theta_t$, as well as $L(\theta_t)$ and $\nabla L(\theta_t)$, cease to be random variables. However, $T_k$ and $\mathcal{D}^k{s}$ utilized for computing $\theta_{t+1}^k$ for any $k \in T_k$ remain random. 
By approximating the gradient estimate's departure from an unbiased estimator, the proof essentially attempts to restrict the first and second moments of the gradient estimate used in the DP-Meta learning update. We then use the descending inequality to achieve the intended result. Initially, observe that the DP-Meta learning update can be expressed as

\begin{equation}
    \theta_{t+1}=\theta_t - \dfrac{\beta_t}{T_k}\sum_{k\in\mathcal{T_k}_k}G_k(\theta_t),
\end{equation}
where,

\begin{equation}
    G_k(\theta_t)=\widehat{\nabla}l_k(\theta_t-\alpha\widehat{\nabla}l_k(\theta_t,\mathcal{D}^k_{s}), \mathcal{D}_q^k).
\end{equation}

Initially, we outline the first and second moments of $G_k(\theta)$ conditioned on $\mathcal{F}t$. It's crucial to recognize that, given the independent drawing of $\mathcal{D}{s}^k$ and $\mathcal{D}^k_{q}$, we obtain

\begin{equation}
\label{eq:25}
\begin{aligned}
    \mathbb{E}[G_k(\theta_t)]&=\mathbb{E}_{k}[\mathbb{E}_{\mathcal{D}^k_q,\mathcal{D}^i_s}[\widehat{\nabla}l_k(\theta_t-\alpha\widehat{\nabla}l_k(\theta_t,33\mathcal{D}_s^k),\mathcal{D}_q^i)]]
    \\&= \mathbb{E}_{k}[\nabla l_k(\theta_t-\alpha\nabla l_k(\theta_t))+e_{k,t}],    
\end{aligned}
\end{equation}

where $e_{k,t}$ is expressed as

\begin{equation}
\begin{aligned}
    e_{k,t} &= \nabla l_k(\theta_t-\alpha\nabla l_k(\theta_t))\\&-\mathbb{E}_{\mathcal{D}_{s},\mathcal{D}_{q}}[\widehat{\nabla}l_k(\theta_t-\alpha\widehat{\nabla}l_k(\theta_t,\mathcal{D}_{s}^k),\mathcal{D}_q^k)].
\end{aligned}
\end{equation}

Noting $\nabla L_k(\theta)=(I-\alpha\nabla^2l_k(\theta_t))\nabla l_k(\theta_t-\alpha\nabla l_k(\theta_t))$, the right-hand side of \ref{eq:25} can be simplified to

\begin{equation}
\label{eq:26}
    \begin{aligned}
        \mathbb{E}[G_k(\theta_t)]=\mathbb{E}_{k}[(I-\alpha\nabla^2l_k(\theta_t))^{-1}\nabla L_k(\theta_t)+e_{k,t}]. 
    \end{aligned}
\end{equation}

On the right-hand side of \ref{eq:26}, insert the addition and subtraction of $\nabla L_k(\theta_t)$. To accomplish this, use the knowledge that $\mathbb{E}_{k \in T_k}[\nabla L_k(\theta_t)] = \nabla L(\theta_t)$.

\begin{equation}
\label{eq:27}
    \begin{aligned}
        &\mathbb{E}[G_k(\theta_t)]\\&=\mathbb{E}_{k}[(I-\alpha\nabla^2l_k(\theta_t))^{-1}\nabla L_k(\theta_t) - \nabla L_k(\theta_t) \\&+\nabla L_k(\theta_t) +e_{k,t}] \\&=\nabla L(\theta_t) + \mathbb{E}_{k}[(I-\alpha\nabla^2l_k(\theta_t))^{-1}\nabla L_k(\theta_t) \\&- \nabla L_k(\theta_t) +e_{k,t}] \\&=\nabla L(\theta_t) + \mathbb{E}_{k}[((I-\alpha\nabla^2l_k(\theta_t))^{-1}-I)\nabla L_k(\theta_t) +e_{k,t}], 
    \end{aligned}
\end{equation}

where $r_t$ is expressed as

\begin{equation}
    \label{eq:31}
    r_t = \mathbb{E}_{k}[((I-\alpha\nabla^2l_k(\theta_t))^{-1}-I)\nabla L_k(\theta_t) +e_{k,t}].
\end{equation}
Now, \ref{eq:27} can be rewritten as; 

\begin{equation}
    \begin{aligned}
        \mathbb{E}[G_k(\theta_t)]=\nabla L(\theta_t) +  r_t.
    \end{aligned}
\end{equation}

Now we simplify $r_t$ as;

\begin{equation}
\label{eq:28}
\begin{aligned}
    r_t &= \mathbb{E}_{k}[((I-\alpha\nabla^2l_k(\theta_t))^{-1}-I)\nabla L_k(\theta_t) +e_{k,t}] \\& = \sum_{j=1}^{\infty}\alpha^j\mathbb{E}_{k}[(\nabla^2l_k(\theta_t))^j\nabla L_k(\theta_t)] +\mathbb{E}_{k}[e_{k,t}].
\end{aligned}
\end{equation}

Next, we want to determine an upper bound on $r_t$'s norm. In order to accomplish this, we can place a ceiling on the $l_2$ norm of the first term in \ref{eq:28} by

\begin{equation}
    \label{eq:29}
    \begin{aligned}
        &\|\sum_{j=1}^{\infty}\alpha^j\mathbb{E}_{k}[(\nabla^2l_k(\theta_t))^j\nabla L_k(\theta_t)]\| \\&\leq \sum_{j=1}^{\infty}\alpha^j\lambda^j \mathbb{E}_{k \in T_k}\|\nabla L_k(\theta_t)\| \leq\dfrac{\alpha\lambda}{1-\alpha\lambda}\mathbb{E}_{k \in T_k}\|\nabla L_k(\theta_t)\| \\& \leq 0.22\|\nabla L(\theta_t)\| + 2\alpha\lambda\varphi.
    \end{aligned}
\end{equation}

The final inequality is derived from Lemma~\ref{lem:3} and considering that $\alpha\lambda\leq \dfrac{1}{10}$. Additionally, in line with the outcome from Lemma~\ref{lem:2}, we establish that $|e_{k,t}|$ for any $k$ is capped by $\dfrac{\alpha\lambda\widehat{\varphi}}{\sqrt{\mathcal{D}_{s}}}$. Notably, the expectation of a random variable is bounded by its upper constant when the norm of the random variable is restricted by it. Therefore, we can express

\begin{equation}
    \label{eq:30}
    \|\mathbb{E}_{k \in T_k}[e_{k,t}]\| \leq \dfrac{\alpha\lambda\widehat{\varphi}}.{\sqrt{\mathcal{D}_{s}}}
\end{equation}

We can demonstrate that $\|r_t\|$ is upper bounded by \ref{eq:28} by using the inequalities in \ref{eq:29} and \ref{eq:30} as well as the definition of $\|r_t\|$ in that equation.

\begin{equation}
\begin{aligned}
    \|r_t\| \leq 0.22\|\nabla L(\theta_t)\|+2\alpha\lambda\varphi+0.1\dfrac{\widehat{\varphi}}{\sqrt{\mathcal{D}_{s}}}.
\end{aligned}
\end{equation}

Therefore, by applying the inequality $(a + b + c)^2 \leq 3a^2+ 3b^2 + 3c^2$, we can demonstrate

\begin{equation}
\label{eq:34}
\begin{aligned}
    \|r_t\|^2 \leq 0.15\|\nabla L(\theta_t)\|^2+12\alpha^2\lambda^2\varphi^2+0.03\dfrac{\widehat{\varphi}^2}{\mathcal{D}_{s}}.
\end{aligned}
\end{equation}

Taking into account this outcome and the expression in \ref{eq:31}, we can express

\begin{equation}
    \begin{aligned}
        \|\mathbb{E}[G_k(\theta_t)]\|^2 &\leq 2\|\nabla L(\theta_t)\|^2 + 2\|r_t\|^2 \\&\leq 2.3\|\nabla L(\theta_t)\|^2+24\alpha^2\lambda^2\varphi^2+0.06\dfrac{\widehat{\varphi}^2}{\mathcal{D}_{s}}.
    \end{aligned}
\end{equation}

We can then determine a maximum for the second instant of $|\mathbb{E}[G_k(\theta_t)]|^2$. It's crucial to remember that

\begin{equation}
\label{eq:32}
    \begin{aligned}
        &\mathbb{E}[\|G_k(\theta_t)\|^2]\| \\&=\mathbb{E}_{k \in T_k}[\mathbb{E}_{\mathcal{D}^k_{q},\mathcal{D}^k_{s}}\|\widehat{\nabla}l_k(\theta_t-\alpha\widehat{\nabla}l_k(\theta_t, \mathcal{D}^k_{s}),\mathcal{D}^k_q)\|^2].
    \end{aligned}
\end{equation}

Applying Lemma~\ref{lem:2} with $\phi=1$, we obtain

\begin{equation}
\label{eq:33}
\begin{aligned}
    &\mathbb{E}_{\mathcal{D}^k_{s}\mathcal{D}^k_q}[\|\widehat{\nabla}l_k(\theta_t-\alpha\widehat{\nabla}l_k(\theta_t,\mathcal{D}^k_{s}),\mathcal{D}_q^k)\|^2 \\& \leq 
    2\|\nabla l_k(\theta_t-\alpha\nabla l_k(\theta_t))\|^2 + \dfrac{(2\alpha^2\lambda^2\widehat{\alpha}^2}{\mathcal{D}_{s}} + \dfrac{\widehat{\alpha}}{\mathcal{D}_q} \\& \leq 2\dfrac{\|\nabla L_k(\theta_t)\|^2}{(1-\alpha\lambda)^2}+ \dfrac{(2\alpha^2\lambda^2\widehat{\alpha}^2}{\mathcal{D}_{s}} + \dfrac{\widehat{\alpha}}{\mathcal{D}_q}.
\end{aligned}
\end{equation}

The final inequality stems from Lemma~\ref{lem:3} and considering that $|I-\alpha\nabla^2 l_k(\theta)|\leq 1-\alpha\lambda$. Substituting (\ref{eq:33}) into \ref{eq:32} and applying Lemma~\ref{lem:3} results in

\begin{equation}
    \begin{aligned}
        \|\mathbb{E}[G_k(\theta_t)]\|^2 \leq 20\|\nabla L(\theta_t)\|^2 + 7\varphi^2+\widehat{\sigma}^2(\dfrac{1}{\mathcal{D}_q}+\dfrac{0.02}{\mathcal{D}_{s}}).
    \end{aligned}
\end{equation}

We now proceed to illustrate the main result with the established upper bounds on $|\mathbb{E}[G_k(\theta_t)]|$ and $|\mathbb{E}[G_k(\theta_t)]|^2$.

\begin{equation}
\label{eq:35}
\begin{aligned}
    &\mathbb{E}[L(\theta_{t+1})| \mathcal{F}_t] \leq L(\theta_t)-\|\nabla L(\theta_t)\|^2(\mathbb{E}[\beta_t|\mathcal{F}_t]\\&-\dfrac{\lambda_t}{2}\mathbb{E}[\beta_t^2|\mathcal{F}_t](2.3+\dfrac{20}{T_k})) + \mathbb{E}[\beta_t|\mathcal{F}_t]\|\nabla L(\theta_t)\| \|r_t\| \\&+ \dfrac{\lambda_t}{2}\mathbb{E}[\beta_t^2|\mathcal{F}_t](\dfrac{1}{T_k}(7\varphi^2+\widehat{\varphi}^2(\dfrac{1}{\mathcal{D}_q}+\dfrac{0.02}{\mathcal{D}_{s}}))\\&+24\alpha^2\lambda^2\varphi^2+0.06\dfrac{\widehat{\varphi^2}}{\mathcal{D}_{s}}).
\end{aligned}
\end{equation}

Take note that, by utilizing \ref{eq:34}, we can express

\begin{equation}
\begin{aligned}
    \|\nabla L(\theta_t)\|\|r_t\|&\leq \dfrac{1}{2}(\dfrac{\|\nabla F(\theta_t)\|^2}{2}+2\|r_k\|^2) \\&\leq 0.4\|\nabla L(\theta_t)\|^2+0.03\dfrac{\widehat{\varphi}^2}{\mathcal{D}_{s}}+12\alpha^2\lambda^2\varphi^2.
\end{aligned}
\end{equation}

Inserting this bound into \ref{eq:35} indicates

\begin{equation}
    \begin{aligned}
&\mathbb{E}[L(\theta_{t+1})|\mathcal{F}_t] \leq L(\theta_t) - \|\nabla L(\theta_t)\|^2(0.6\mathbb{E}[\beta_t|\mathcal{F}_t]\\&- \dfrac{\lambda_t}{2}\mathbb{E}[\beta_t^2|\mathcal{F}_t](2.3+\dfrac{20}{T_k})) + \dfrac{\lambda_t}{2}\mathbb{E}[\beta_t^2|\mathcal{F}_t](\dfrac{1}{T_k}(7\varphi^2+\\&\widehat{\varphi}^2(\dfrac{1}{\mathcal{D}_q}+\dfrac{0.02}{\mathcal{D}_{s}}))+24\alpha^2\lambda^2\varphi^2+0.06\dfrac{\widehat{\varphi^2}}{\mathcal{D}_{s}}) \\& +\mathbb{E}[\beta_t|\mathcal{F}_t](12\alpha^2\lambda^2\varphi^2+0.03\dfrac{\widehat{\varphi}^2}{\mathcal{D}_{s}}).
    \end{aligned}
\end{equation}

Leveraging $\beta_t = \widehat{\beta}(\theta_t)/18$,

\begin{equation}
\label{eq:39}
    \begin{aligned}
&\mathbb{E}[L(\theta_{t+1})|\mathcal{F}_t] \leq L(\theta_t) - \dfrac{1}{100\lambda_t}\|\nabla L(\theta)\|^2 \\&+\varphi^2(\dfrac{7}{828\lambda B}+\dfrac{\alpha^2\lambda}{6})+\dfrac{\widehat{\varphi}^2/\mathcal{D}_q}{828\lambda B}+\dfrac{\widehat{\varphi}^2/\mathcal{D}_{s}}{600\lambda}.
    \end{aligned}
\end{equation}

Next, utilizing Lemma~\ref{lem:3} in conjunction with the fact that $\alpha \leq \dfrac{1}{6\lambda}$, observe

\begin{equation}
\label{eq:36}
    \begin{aligned}
        \dfrac{1}{\lambda_t}\|\nabla L(\theta_t)\|^2 &= \dfrac{\|\nabla L(\theta_t)\|^2}{4\lambda+2\tau\alpha \mathbb{E}_{k \in T_k} \|\nabla l_k(\theta_t)\|} \\& \geq \dfrac{\|\nabla L(\theta_t)\|^2}{4\lambda+2\tau\alpha\varphi + 2\tau\alpha\|\nabla l_k(\theta_t)\|}. 
    \end{aligned}
\end{equation}

Additionally, we are aware that

\begin{equation}
\label{eq:37}
    \|\nabla l(\theta_t)\|\leq 2\|\nabla L(\theta_t)\| + \varphi.
\end{equation}

Substituting \ref{eq:37} into \ref{eq:36} results in

\begin{equation}
\label{eq:38}
    \begin{aligned}
        \dfrac{1}{\lambda_t}\|\nabla L(\theta_t)\|^2 \geq \dfrac{\|\nabla L(\theta_t)\|^2}{4\lambda+2\tau\alpha\varphi + 4\tau\alpha\|\nabla L(\theta_t)\|}.
    \end{aligned}
\end{equation}

Now, by inserting \ref{eq:38} into \ref{eq:39} and taking expectations from both sides with respect to $\mathcal{F}_t$, and employing the tower rule, we get

\begin{equation}
    \begin{aligned}
\mathbb{E}[L(\theta_{t+1})]& \leq \mathbb{E}[L(\theta_t)] - \dfrac{1}{100}\mathbb{E}[\dfrac{\|\nabla L(\theta_t)\|^2}{4\lambda+2\tau\alpha\varphi + 4\tau\alpha\|\nabla L(\theta_t)\|}]
\\&+  \varphi^2(\dfrac{7}{828\lambda B}+\dfrac{\alpha^2\lambda}{6})+\dfrac{\widehat{\varphi}^2/\mathcal{D}_q}{828\lambda B}+\dfrac{\widehat{\varphi}^2/\mathcal{D}_{s}}{600\lambda}.
    \end{aligned}
\end{equation}

Consider that, according to the Cauchy-Schwartz inequality, $\mathbb{E}[X]\mathbb{E}[Y]\geq\mathbb[\sqrt{XY}]^2$
for non-negative variables $X$ and $Y$. Let's choose $X = [\dfrac{|\nabla L(\theta_t)|^2}{4\lambda+2\tau\alpha\varphi + 4\tau\alpha|\nabla L(\theta_t)|}]$ and $Y = 4\lambda+2\tau\alpha\varphi + 4\tau\alpha|\nabla L(\theta_t)|$. This selection results in

\begin{equation}
\label{eq:40}
    \begin{aligned}
        &\mathbb{E}[\dfrac{\|\nabla L(\theta_t)\|^2}{4\lambda+4\tau\alpha\varphi + 4\tau\alpha\|\nabla L(\theta_t)\|}] \\&\geq \dfrac{\mathbb{E}[\|\nabla L(\theta_t)\|]^2}{4\lambda+4\tau\alpha\varphi + 4\tau\alpha\mathbb{E}[\|\nabla L(\theta_t)\|]} \\& \geq \dfrac{\mathbb{E}[\|\nabla L(\theta_t)\|]^2}{2\max\{4\lambda+4\tau\alpha\varphi, 4\tau\alpha\mathbb{E}[\|\nabla L(\theta_t)\|]\}} \\&= \min\{\dfrac{\mathbb{E}[\|\nabla L(\theta_t)\|]^2}{8\lambda+8\tau\alpha\varphi}, \dfrac{\mathbb{E}[\|\nabla L(\theta_t)\|]}{8\tau\alpha} \}.
    \end{aligned}
\end{equation}

Inserting \ref{eq:40} into \ref{eq:39} results in

\begin{equation}
\label{eq:41}
    \begin{aligned}
&\mathbb{E}[L(\theta_{t+1})] \\&\leq \mathbb{E}[L(\theta_t)] - \dfrac{1}{800}\min\{\dfrac{\mathbb{E}[\|\nabla L(\theta_t)\|]^2}{\lambda+\tau\alpha\varphi}, \dfrac{\mathbb{E}[\|\nabla L(\theta_t)\|]}{\tau\alpha} \}
 \\& + \varphi^2(\dfrac{7}{828\lambda B}+\dfrac{\alpha^2\lambda}{6})+\dfrac{\widehat{\varphi}^2/\mathcal{D}_q}{828\lambda B}+\dfrac{\widehat{\varphi}^2/\mathcal{D}_{s}}{600\lambda}.
    \end{aligned}
\end{equation}

Suppose \ref{eq:lem2} does not hold at iteration $t$. In that case, we have

\begin{equation}
\begin{aligned}
    &\mathbb{E}[\|\nabla L(\theta_t)\|] \\&\geq \max\{\sqrt{14(1+\dfrac{\tau\alpha}{\lambda}\varphi)(\varphi^2(\dfrac{1}{T_k}+20\alpha^2\lambda^2)+\dfrac{\widehat{\varphi}^2}{B\mathcal{D}_q}+\dfrac{\widehat{\varphi}^2}{\mathcal{D}_{s}})} \\& ,\dfrac{14\tau\alpha}{\lambda}(\alpha^2(\dfrac{1}{T_k}+20\alpha^2\lambda^2)+\dfrac{\widehat{\varphi}^2}{B\mathcal{D}_q}+\dfrac{\widehat{\varphi}^2}{B\mathcal{D}_{s}}) \}.
\end{aligned}
\end{equation}

This leads to

\begin{equation}
    \begin{aligned}
&\dfrac{1}{1600}\min\{\dfrac{\mathbb{E}[\|\nabla L(\theta_t)\|]^2}{\lambda+\tau\alpha\varphi}, \dfrac{\mathbb{E}[\|\nabla L(\theta_t)\|]}{\tau\alpha} \} \\& \geq \dfrac{1}{1600\lambda} (\varphi^2(\dfrac{1}{T_k}+20\alpha^2\lambda^2)+\dfrac{\widehat{\varphi}^2}{B\mathcal{D}_q}+\dfrac{\widehat{\varphi}^2}{\mathcal{D}_{s}}) \\& \geq \varphi^2(\dfrac{7}{828\lambda B}+\dfrac{\alpha^2\lambda}{6})+\dfrac{\widehat{\varphi}^2/\mathcal{D}_q}{828\lambda B}+\dfrac{\widehat{\varphi}^2/\mathcal{D}_{s}}{600\lambda}.
    \end{aligned}
\end{equation}

Therefore, utilizing \ref{eq:41}, we get

\begin{equation}
\label{eq:42}
    \begin{aligned}
&\mathbb{E}[L(\theta_{t+1})] \\&\leq \mathbb{E}[L(\theta_t)] - \dfrac{1}{1600}\min\{\dfrac{\mathbb{E}[\|\nabla L(\theta_t)\|]^2}{\lambda+\tau\alpha\varphi}, \dfrac{\mathbb{E}[\|\nabla L(\theta_t)\|]}{\tau\alpha} \} \\&\leq \mathbb{E}[L(\theta_t)]-\dfrac{1}{1600\lambda} (\varphi^2(\dfrac{1}{T_k}+20\alpha^2\lambda^2)+\dfrac{\widehat{\varphi}^2}{B\mathcal{D}_q}+\dfrac{\widehat{\varphi}^2}{\mathcal{D}_{s}}).
    \end{aligned}
\end{equation}

Taking as given that $\mathbb{E}[\nabla L(\theta_{t})] \geq \mu$, we also recognise that \ref{eq:lem2} does not hold at iteration $t$. This suggests

\begin{equation}
\label{eq:43}
    \begin{aligned}
&\mathbb{E}[L(\theta_{t+1})] \leq \mathbb{E}[L(\theta_t)] - \dfrac{1}{1600}\min\{\dfrac{\mu^2}{\lambda+\tau\alpha\varphi}, \dfrac{\mu}{\tau\alpha} \} \\&\leq \mathbb{E}[L(\theta_t)]-\dfrac{1}{1600\lambda} (\varphi^2(\dfrac{1}{T_k}+20\alpha^2\lambda^2)+\dfrac{\widehat{\varphi}^2}{B\mathcal{D}_q}+\dfrac{\widehat{\varphi}^2}{\mathcal{D}_{s}}).
    \end{aligned}
\end{equation}

\balance
This result shows that the objective function value declines by a constant value in expectation if the condition in \ref{eq:41} is not satisfied. By summing both sides of \ref{eq:43} from $0$ to $T - 1$, we can determine that, assuming that this condition does not apply for all iterations $0, \ldots, T - 1$,

\begin{equation}
    \sum_{t=0}^{T-1}\mathbb{E}[L(\theta_t+1)] \leq \sum_{t=0}^{T-1}\mathbb{E}[L(\theta_t)] -
    \sum_{t=0}^{T-1}\dfrac{1}{1600}\dfrac{\mu^2}{\lambda+\tau\alpha(\varphi+\mu)},
\end{equation}

which means

\begin{equation}
    \mathbb{E}[L(\theta_T)] \leq \mathbb{E}[L(\theta_0)] -
    \dfrac{T}{1600}\dfrac{\mu^2}{\lambda+\tau\alpha(\varphi+\mu)}.
\end{equation}

Hence 

\begin{equation}
\begin{aligned}
    T &\leq (\mathbb{E}[L(\theta_0)]  - \mathbb{E}[L(\theta_T)])1600\dfrac{\lambda+\tau\alpha(\varphi+\mu)}{\mu^2}
    \\& \leq (L(\theta_0)  - L(\theta^*))1600\dfrac{\lambda+\tau\alpha(\varphi+\mu)}{\mu^2}
\end{aligned}
\end{equation}

This argument shows that the time $T$ cannot exceed $(L(\theta_0) - L(\theta^))1600\dfrac{\lambda+\tau\alpha(\varphi+\mu)}{\mu^2}$ if the requirement in \ref{eq:lem2} is not satisfied for all $t$ from $0$ to $T - 1$. Consequently, following $(L(\theta_0) - L(\theta^))1600\dfrac{\lambda+\tau\alpha(\varphi+\mu)}{\mu^2}$ iterations, the proof is finished because at least one of the iterates produced by DP-Meta learning satisfies the requirement in \ref{eq:lem2}.

\end{proof}

\vfill

\end{document}